\newcommand{\bm}[1]{\boldsymbol{#1}}
\newcommand{\Conv}{\mathop{\scalebox{1.7}{\raisebox{-0.2ex}{$\ast$}}}}%
\newcommand{\set}[1]{\left\lbrace #1 \right\rbrace}
\newcommand{\abs}[1]{\left\lvert #1 \right\lvert}
\newcommand{\norm}[1]{\left\lVert #1 \right\lVert}
\newcommand{\depen}[1]{^{(#1)}}
\newcommand{\bhh}{\bm {\hat H}}
\newcommand{\R}{{\mathbb R}}
\newcommand{\X}{{\mathcal X}}
\newcommand{\Y}{{\mathcal Y}}
\newcommand{\C}{{\mathcal C}}
\newcommand{\N}{{\mathbb N}}
\newcommand{\I}{{\mathcal I}}
\newcommand{\calF}{{\mathcal F}}
\newcommand{\calH}{{\mathcal H}}
\newcommand{\Cab}{{{\C}\depen{\alpha,\beta}}}
\newcommand{\Cabrnn}{{{\C}_{\text{RNN}}}}
\DeclareMathOperator*{\argmax}{arg\,max}
\DeclareMathOperator*{\argmin}{arg\,min}
\DeclareMathOperator{\ff}{FF}
\theoremstyle{plain}
\newtheorem{theorem}{Theorem}[section]
\newtheorem{proposition}[theorem]{Proposition}
\newtheorem{lemma}[theorem]{Lemma}
\theoremstyle{definition}
\theoremstyle{remark}
\title{Approximation Rate of the Transformer Architecture for Sequence Modeling}
\newcommand*\samethanks[1][\value{footnote}]{\footnotemark[#1]}
\author{%
  Haotian Jiang 
  \thanks{CNRS@CREATE LTD, 1 Create Way, \#08-01 CREATE Tower, Singapore 138602}\\
  \texttt{haotian.jiang@cnrsatcreate.sg} 
  \And
  Qianxiao Li
  \samethanks[1]$\,$
  \thanks{Department of Mathematics, Institute for Functional Intelligent Materials, National University of Singapore}\\
  \texttt{qianxiao@nus.edu.sg}
}
\begin{document}

\maketitle

\begin{abstract}
    The Transformer architecture is widely applied in sequence modeling applications, 
    yet the theoretical understanding of its working principles remains limited. 
    In this work, 
    we investigate the approximation rate results for the Transformer architectures on general sequence to sequence target relationships.
    We begin by establishing a representation theorem for the target space and introduce a novel notion of complexity measures to construct approximation spaces.
    These measures encapsulate both pairwise and pointwise interactions among input tokens.
    Based on this framework, 
    we derive an explicit Jackson-type approximation rate estimate for 
    the Transformer. 
    This rate sheds light on the underlying structural characteristics of the Transformer, 
    thereby delineating the types of sequential relationships they excel in approximating. 
    Notably, our findings on approximation rates facilitate a concrete comparison between the Transformer and traditional sequence modeling approaches, such as recurrent neural networks.
\end{abstract}

\section{Introduction}

The Transformer architecture, 
as introduced by \citet{vaswani2017.AttentionAllYou} has become immensely popular in the field of sequence modeling.
Variants such as BERT \cite{devlin2019.BERTPretrainingDeep} and GPT \cite{brown2020.LanguageModelsAre}
have achieved excellent performance, 
becoming the default choices for natural language processing (NLP) problems.
Concurrently, 
\citet{dosovitskiy2020.ImageWorth16x16} successfully applied the Transformer
to image classification problems by flattening the image into a sequence of patches.
Despite its success across various fields of practical application,
many theoretical questions remain unanswered.
Among these, 
we focus on two essential questions in this work:
firstly, 
the approximation rate of the Transformer on sequence modeling; 
secondly, 
the comparative advantages and disadvantages of the Transformer with recurrent neural networks (RNNs) on different temporal structures.

The concept of Jackson-type approximation rates is derived from the Jackson Theorem for polynomials \cite{jackson1930.TheoryApproximation}, and is further elaborated in the work of \citet{devore1998.NonlinearApproximation} for addressing general forward approximation problems.
To illustrate this, consider classic polynomial approximation.
It involves defining an appropriate approximation space, 
accompanied by specific complexity measures, 
precisely the Sobolev space.
According to the Jackson theorem, 
a function with a small Sobolev norm can be efficiently approximated by a polynomial.
We aim to establish similar approximation rate results for the Transformer. 
This identifies the type of targets that the Transformer can efficiently learn.
We examine general non-linear sequence-to-sequence target relationships, 
extending the linear target form explored in previous studies
of the approximation results for linear RNNs and linear temporal convolution networks
\cite{li2022.ApproximationOptimizationTheory, jiang2021.ApproximationTheoryConvolutional}.
We introduce a novel notion of complexity,
establishing a concrete target space from which approximation rates can be deduced. 
This enables us to identify the types of sequential relationships that Transformer can approximate efficiently. 
Based on our theoretical analysis, 
we identify concrete classes of temporal structures where the Transformer outperforms the RNNs and vice versa.
Our main contributions are summarized as follows: 
1. We develop Jackson-type approximation rate results for single-layer Transformer networks with one attention head. 
    Our analysis reveals that the approximation capacity is governed by a low-rank structure within the pairwise coupling of the target's temporal features. Empirical validation confirms that the findings observed under theoretical settings also hold true in practical applications.
2. We conduct a comparative analysis between the Transformer and RNNs, 
    aiming to identify specific types of temporal structures where one model excels or underperforms compared to the other.

\section{Related work}\label{sec: related work}

We first review the approximation results of Transformer networks. 
The universal approximation property (UAP) of the Transformer architecture is first proved in \citet{yun2020.AretransformersUniversal},
which is further extended to Transformers with sparse attention matrices \cite{yun2020.ConnectionsAreExpressive}.
The above universal results are developed for a deep Transformer structure.
In contrast, \citet{kajitsuka2023.AreTransformersOne} applying a similar technique to prove one layer Transformers can achieve UAP by increasing the width.
Additionally, 
\citet{kratsios2022.UniversalApproximationConstraints,edelman2022.InductiveBiasesVariable,luo2022.YourTransformerMay} considers the UAP of the Transformer in various different settings.
\citet{giannou2023.LoopedTransformersProgrammable} considers a special setting regarding expressiveness, demonstrating that Transformers can represent any computer program.
Beyond the UAP results, there have been developments in specific approximation rate results.
\citet{gurevych2021.RateConvergenceClassifier} demonstrated the rate of the misclassification probability
by considering the approximation of hierarchical composition functions, which are composed of sparse functions.
This rate takes into account both the level of composition and the smoothness of the component functions.
\citet{bai2023.TransformersStatisticiansProvable} and \citet{wang2024.UnderstandingExpressivePowera} explore target relationships with certain special structures. 
Additionally, 
\citet{takakura2023.ApproximationEstimationAbility} developed approximation rates for a function space 
consisting of infinite-length sequence-to-sequence functions, which is characterized by the smoothness of the functions.
Our approximation rate result steps further by considering temporal structures, 
shedding light on how the Transformer model handles temporal relationships.
Apart from the approximation results, numerous intrinsic properties of the Transformer have been investigated.
\citet{dong2021.AttentionNotAll} and \citet{bhojanapalli2020.LowRankBottleneckMultihead} considers the rank structure of the attention matrices.
\citet{levine2020.LimitsDepthEfficiencies} examine the correlation between the dependency of input variables and the depth of the model.
The Transformer is a very flexible architecture, 
such that a special configuration of parameters can emulate other architectures.
For example, \citet{cordonnier2020.RelationshipSelfAttentionConvolutional, li2021.CanVisionTransformers} showed that
attention layers under certain assumptions can perform convolution operations.
However, not all emulations are valid explanations of the working principles of the Transformer.
In this context, 
definitions of complexity measures and the resulting Jackson-type approximation rate estimates provide more insight into the inner workings of the architecture. 
This is the focus of the current work.

\section{Sequence modeling as an approximation problem} \label{sec: motivation}
We first motivate the theoretical settings of Jackson-type approximation rate results by considering classic polynomial approximation.
Then,
we formulate the Transformer as an instance of such an approximation problem.

\paragraph{Motivation of Jackson-type Approximation Rates}

Consider two normed vector spaces, $\X$ and $\Y$, 
designated as the input and output spaces, respectively. 
We define the target space $\C \subset \Y^\X$ as a set of mappings from $\X$ to $\Y$ that we aim to approximate with simpler functions. 
The hypothesis space is denoted by $\calH = \bigcup_m\calH\depen{m}$, 
where ${\calH\depen{m}}$ represents a sequence of hypothesis spaces. 
Here, $m$ denotes the complexity or the approximation budget of these spaces. Hypothesis space $\calH$ encompasses the candidate functions used to approximate targets in $\C$.
Let $\alpha$ be a constant, 
we introduce a complexity measure, 
denoted as $C\depen{\alpha}: \C\to\R$, 
based on the structure of $\calH$.
The complexity measure is used to construct an approximation space $\C\depen{\alpha}:=\{H\in\C: C\depen{\alpha}(H)<\infty \}$, 
which is usually dense in $\C$. 
Then for any $H \in \C\depen{\alpha}$, 
the Jackson-type approximation rate is expressed as follows:
\begin{align}
\inf_{\hat H\in\calH\depen{m}}\norm{H- \hat H}\leq E(C\depen{\alpha}(H), m).
\end{align}
Here, 
the error bound $E(\cdot,m)$ decreases to zero as $m$ approaches infinity, 
and the rate of decay is usually called the approximation rate. 
In this context, $C\depen{\alpha}(H)$ quantifies the complexity of a target $H$ when approximated using $\calH$. 
A smaller value indicates that the target $H$ can be more efficiently approximated with candidates from $\calH$. 
Consequently, 
the complexity measure discerns the types of targets that can be efficiently approximated within the given hypothesis space.
Notably, 
different hypothesis spaces typically give rise to different complexity measures and approximation spaces. 
These variations characterize the approximation capabilities of the hypothesis spaces themselves.

Defining an appropriate approximation space $\C\depen{\alpha}$ is essential. 
Without specific structures, the general target space $\C$ does not provide any rate results.
Opting for an approximation space with defined structures allows for the derivation of approximation rates. 
Moreover, 
the property that $\C\depen{\alpha}$ is dense in $\C$ ensures that the restriction to $\C\depen{\alpha}$ is not overly limiting, 
preserving the necessary expressiveness of the space.
To illustrate these concepts, 
we consider polynomial approximation over the interval $[0,1]$.
In this case we set $\X = [0,1]$ and $\Y = \R$.
The target space $\C = C([0,1])$ is the set of continuous functions defined on $[0,1]$.
The hypothesis space comprises all polynomials, 
expressed as follows:
\begin{equation*}
    \begin{aligned}
        \calH &= \bigcup_{m\in\N} \calH\depen{m} = \bigcup_{m\in\N}\left\{ \hat{H}(x) = \sum_{k=0}^{m-1}a_k x^k: a_k\in \R \right\}.
    \end{aligned}
\end{equation*}

According to the Jackson theorem for \cite{jackson1930.TheoryApproximation},
the Sobolev norm serves as an appropriate complexity measure, 
defined as 
$C\depen{\alpha}(H) = \max_{r=1\dots\alpha}\|{H\depen{r}}\|_\infty$.
Let $\C\depen{\alpha}$ be the approximation space containing targets with finite complexity measures.
Consequently, 
for $H\in\C\depen{\alpha}$, we have the following approximate rate:
\begin{equation}
    \inf_{\hat H\in\calH\depen{m}}\|{H- \hat H}\|\leq \frac{c_\alpha}{m^\alpha} C(H).
\end{equation}
Here, $c_\alpha$ is a constant depending only on $\alpha$.
This theorem implies that smooth functions with small Sobolev norms can be efficiently approximated by polynomials.
Developing Jackson-type approximation rates for various sequence modeling hypothesis spaces is crucial for understanding their differences. 
Jackson-type results for RNNs, CNNs, and encoder-decoder hypothesis spaces have been established in 
\citet{li2022.ApproximationOptimizationTheory, jiang2021.ApproximationTheoryConvolutional, li2021.ApproximationPropertiesRecurrent}, 
where complexity measures such as decaying memory and sparsity were found to influence the approximation rates.
In \cref{sec: Jackson results}, we identify appropriate complexity measures regarding the Transformer and develop corresponding approximation rates. 
This enables us to discern the essential structures that facilitate efficient approximation using the Transformer. 
Furthermore, 
it allows us to understand how and when the Transformer architecture differs from traditional sequence modeling architecture RNNs,
which we will discuss in \cref{sec: 6 compare}.

\paragraph{Formulation of Sequence Modeling as Approximation Problems}

In sequence modeling, we seek to learn relationships between two sequences $\bm x$ and $\bm y$.
Mathematically, we consider an input sequence space
\begin{equation}
    \X= \set{\bm x: x(s)\in [0,1]^d, \text{ for all } s\in[\tau]}.
\end{equation}
Here, $[\tau] := \{1, \dots, \tau\}$ and $\tau$ denotes the maximum length of the input,
and we focus on the finite setting, where $\tau < \infty$.
Corresponding to each input $\bm x \in \X$ is an output sequence $\bm y$ belonging to
\begin{equation}
    \Y = \set{\bm y: y(s)\in \R, \text{ for all } s\in[\tau]}.
\end{equation}
We use $\bm H := \set{H_t}_{t=1}^\tau$ to denote the mapping between $\bm x$ and $\bm y$,
such that $y(t) = H_t(\bm x)$ for each $t \in [\tau]$.
Define $C(\X, \Y)$ to denote the space of continuous mappings between the input and output space.
We may regard each $H_t: [0,1]^{d\times\tau}\to\R$ as a $\tau$-variable function,
where each variable is a vector in $[0,1]^d$.
Next, we define the Transformer hypothesis space.

\paragraph{The Transformer Hypothesis Space.}
We consider the following Transformer block
retaining most of the important components. 
\begin{align}\label{eq: model form}
    \hat H_t(\bm {x}) = 
    \hat F\Bigg(\sum_{s=1}^\tau\sigma[(W_Q \hat h(t))^\top W_{K} \hat h(\cdot)](s)
    \cdot W_V\hat h(s)\Bigg),
\end{align}
where $ \hat h= \hat f \circ x$ and $\hat F:\R^{m_v}\to\R, \hat f:\R^d\to\R^n$ are two feed-forward networks. 
The parameter matrices have dimension
$W_Q, W_K \in \R^{m_h \times n}$, 
$W_V \in \R^{m_v \times n}$.
The softmax function is denoted as 
$\sigma$,
such that $\sigma[\rho(t,\cdot)](s)= \frac{\exp(\rho(t,s))}{\sum_{s'}\exp(\rho(t,s'))}$.
We focus on a simplified architecture: 
a single-layer Transformer with one head. 
In this work, 
layer normalization and residual connections are not taken into account. 
While this constitutes a simplified setting intended for theoretical analysis, 
it's worth noting that the phenomena observed under the theoretical settings also hold true in practical applications, 
as we have demonstrated in \cref{sec: 5 experiments}.
The approximation budget of the Transformer depends on several components.
We use $\hat{\calF}\depen{m_{\ff}}$ to denote the class of feed-forward networks
used in the Transformer with budget $m_{\ff}$,
which is usually determined by the number of neurons and layers.
Let $m = (n,m_h, m_v, m_{\ff})$ denote the overall approximation budget.
We use $\calH\depen{m}$ to denote the Transformer with complexity $m$.
Then, we define the Transformer hypothesis space by
\begin{equation}\label{eq: hypothesis space}
    \begin{aligned}
        \calH= \bigcup_{m} \calH\depen{m}, \quad \quad
    \calH\depen{m} =
    \Big\{\bhh:\bhh \text{ satisfies \cref{eq: model form} with $m$}\Big\}.
    \end{aligned}
\end{equation}

\section{Approximation results}

Following the motivation of approximation problems for sequence modeling as discussed in \cref{sec: motivation},
this section discusses the approximation rate results for the Transformer.  
Firstly, 
we introduce the notion of the permutation equivariance property of the Transformer and discuss the role of position encodings in eliminating it. 
Next, we define the target space and present the corresponding representation theorem.
We then establish the complexity measures necessary to form the approximation space.
Our main result \cref{thm: approximation rate} presents the approximation rate results.

\paragraph{Permutation Equivariance and Position Encodings}
Our objective is to approximate a target relationship $\bm H$ where each $\bm H$ belongs to $C(\X, \Y)$. 
It is important to note that without specific modifications, 
the Transformer inherently cannot approximate such targets due to its permutation equivariance properties. 
This property implies that permuting the temporal indices of the input sequence results in a corresponding permutation of the output sequence. 
More precisely, 
if $p$ denotes a bijection on $[\tau]$ representing a permutation of $\tau$ objects, 
a sequence of functions $\bm H$ is considered permutation equivariant if for all bijections $p$ and inputs $\bm x\in\X$, 
the condition $\bm H(\bm x \circ p) = \bm H(\bm x) \circ p$ holds. 
The Transformer $\bhh$ within the hypothesis space $\calH$ is indeed permutation equivariant (refer to \cref{appen: permutation equivariance} for details).
Directly applying the Transformer, 
therefore, yields permutation equivariant hypotheses, 
which are inadequate for approximating general sequential relationships that lack this symmetry. 
In practical applications, 
incorporating position encodings is a widely adopted approach to counteract permutation equivariance \cite{vaswani2017.AttentionAllYou}. 
Various methods exist for embedding positional information. 
The simplest approach is fixed encoding, which involves mapping $x(t)$ to a higher-dimensional space and then offsetting each $x(t)$ by distinct distances.
Formally, with $A\in\R^{d'\times d}$ where $d' \geq d$ and a constant or trainable $e(t) \in \R^{d'}$, 
position encodings can be expressed as $x(t) \mapsto A \, x(t) + e(t)$.
For general purposes, 
it's sufficient for the encoded input space $\X\depen{E}$ to satisfy the following condition:
\begin{equation}\label{eq: input space}
    \begin{aligned}
        \X\depen{E} = \big\{\bm x: x(s) \in \I\depen{s}\subset\R^d, \text{ where $\I\depen{i}, \I\depen{j}$} \text{ are disjoint, compact$ \, \forall i,j \in [\tau]$} \big\}.
    \end{aligned}
\end{equation}
This ensures that for each input $\bm x = (x(1), \dots, x(\tau))$, 
all tokens $x(i)$ and $x(j)$ are distinct, 
meaning no two input sequences are temporal permutations of each other. 
Define the set $\I=\bigcup \I_s$ to be the range of the inputs.
Moving forward, 
we assume that position encoding has been applied, 
allowing us to consider $\X\depen{E}$ as the input space.
Consequently, we define the target space to be $\C = C(\X\depen{E}, \Y)$, 
which denotes the space of continuous mappings between $\X\depen{E}$ and $ \Y$.

\subsection{Jackson-type approximation rate} \label{sec: Jackson results}

To derive the Jackson-type approximation rate,
it is necessary first to define the complexity measures to form an approximation space so that approximation rates can be obtained.
We begin with the following representation theorem for the target space.

\paragraph{Representation of the target space $\C$}
Given that the Transformer inherently captures both pairwise and pointwise relations among input tokens through attention and feed-forward components, respectively, 
we are motivated to establish the following representation theorem for the target space $\mathcal{C}$. 
This theorem demonstrates that every target can be exactly expressed in terms of pairwise and pointwise relations.
\begin{theorem}[Representation of the target space]
    Consider $d$-dimensional, length $\tau$ input space $\X\depen{E}$ with position encoding added. 
    Then, 
    for any $\bm H\in C(\X\depen{E}, \Y)$, 
    there exists continuous functions $F\in C([0,1]^n, \R)$, $f\in C(\I, [0,1]^n)$ and 
    $\rho \in C(\I \times \I, \R)$ such that for all $t\in [\tau]$ we have 
    \begin{equation}\label{eq: target form}
        H_t(\bm x) = F\left(\sum_{s=1}^\tau \sigma[\rho(x(t),x(\cdot))](s)f(x(s)) \right),
    \end{equation}
    where $n=2\tau d + 1$ and $\sigma$ is the softmax function. The proof is presented in \cref{appen: density of target space}.
\end{theorem}
We refer to $\rho$ as the temporal coupling component 
and $F$ and $f$ as the element-wise components. 
It's important to note that the functions $F$, $f$, 
and $\rho$ may not be uniquely determined. 
In Appendix \ref{appen: target form}, 
we explore certain invariant properties associated with Equation \ref{eq: target form}. 
Additionally, 
we provide illustrative examples in Appendix \ref{appen: target form} where the target explicitly conforms to Equation \ref{eq: target form}. 
Leveraging this representation theorem, 
we define the following complexity measures for targets $\mathcal{H} \in \mathcal{C}$.

\paragraph{Temporal Coupling Component}
Now, 
we discuss the complexity measures associated with the temporal coupling term $\rho(u,v)$
that is central to understanding the attention mechanism in the Transformer.
We employ the proper orthogonal decomposition (POD) \cite{berkooz1993.ProperOrthogonalDecomposition}
to decompose the temporal coupling of $\rho$.
This approach can be viewed as an extension of matrix singular value decomposition (SVD) to functions of two variables.
We have the following decomposition:
$
    \rho(u,v) = \sum_{k=1}^\infty \sigma_k \phi_k(u)\psi_k(v),
$
where $\phi_k, \psi_k$ are orthonormal bases for $L^2(\I)$, and the singular values $\sigma_k \geq 0$ are arranged in descending order.
The bases $\phi_k$ and $\psi_k$ are of optimal choices,
ensuring that $\hat \rho(u,v)$ satisfies:
\begin{equation}\label{eq: pod}
    \inf_{\text{rank}(\hat \rho)\leq r} \norm{\rho(u,v) - \hat \rho(u,v)}_2^2 = \sum_{k=r+1}^\infty \sigma_k^2,
\end{equation}
analogous to the Eckart-Young theorem for matrices,
with the rank defined as the number of terms in the POD decomposition.
This implies that the approximation quality of $\rho(u,v)$ can be measured by the decay rate of its singular values.
This motivates our following definition of complexity measure regarding $\rho$.
Let $\alpha>1/2$ be a constant,
and $\{\sigma_i\depen{\rho}\}$ be singular values of $\rho$ under POD.
We define the complexity measure of $\bm H$ by
    \begin{align}\label{eq: C1}
        C_1\depen{\alpha}(\bm H) &=
        \inf_{F,f,\rho}
        \inf
        \set{c: \sigma_s\depen{\rho} \leq c{s^{-\alpha}}, s\geq 1 },
    \end{align}
    where the first infimum is taken over all $F,f,\rho$
    such that \cref{eq: target form} holds.
In particular, $C_1\depen{g}(\bm H) < \infty$ if it has a representation in the form \eqref{eq: target form} with $\rho$ having fast decaying singular values.

\paragraph{Element-wise Component}
Now, 
we introduce the complexity measure for approximating the element-wise components $F$ and $f$. 
Let $\calF\depen{m_{\ff}}$ be a hypothesis space comprising feed-forward neural networks with a budget of $m_{\ff}$, 
representing parameters such as width or depth. We assume the existence of $\beta > 0$ such that:
\begin{align}\label{eq:trans complexity_ff}
    \inf_{\hat f\in \calF\depen{m_{\ff}}}\norm{f - \hat f} \leq \frac{C\depen{\beta}_{\ff}(f)}{{m_{\ff}}^\beta}.
\end{align}
Here, $C_{\ff}$ represents a complexity measure of $f$ corresponding to its approximation by $\calF\depen{m_{\ff}}$. 
It is essential to emphasize that we are assuming the existence of pre-existing approximation rate results for the feed-forward component.
For instance, in \citet{barron1994.ApproximationEstimationBounds}, 
$\calF\depen{m_{\ff}}$ is considered to be one-layer neural networks with sigmoidal activation, 
where $m_{FF}$ corresponds to the width of the network.  
By adopting this result, we have $\beta = 1/2$ and $C_{\ff}\depen{\beta}(f)$ is a moment of the Fourier transform of $f$.
For shallow ReLU networks commonly used in Transformers, \citet{klusowski2018.ApproximationCombinationsReLU} demonstrate $\beta = 1/2 + 1/n$, where $n$ is the input dimension of the neural network. 
We maintain the generality of \cref{eq:trans complexity_ff}. This enables us to substitute any other relevant estimates from the mentioned references. 
This flexibility allows for a broader application of the complexity measure in different scenarios and settings.
Next, 
we proceed to define the complexity measure for $\bm H \in \calH$. 
This measure considers all the components that need to be approximated using the feed-forward network.
\begin{equation}\label{eq: C2}
    \begin{aligned}
        C_2\depen{\beta}(\bm H,k) = \inf_{F,f,\rho} \left(C\depen{\beta}_{\ff}(F) +C\depen{\beta}_{\ff}(f)+C\depen{\beta}_{\ff}(\rho,k)\right),
    \end{aligned}
\end{equation}
where $C\depen{\beta}_{\ff}(\rho,k) = \sum_{i=1}^k (C\depen{\beta}_{\ff}(\phi_i)+C\depen{\beta}_{\ff}(\psi_i))$ considers the approximation of the approximation of the POD bases $\phi_i$ and $\psi_i$ for the target function $\rho$ .
Notably, this complexity measure is dependent on a parameter $k$, 
which determines the number of bases we want to consider in the approximation of $\rho$. 
Finally, 
we define a complexity measure considering the norm of the target.
\begin{equation}\label{eq: C0}
    C_0(\bm H) =
    \inf_{F,f,\rho}\set{K_F\norm{f}_{\infty}(\sup_i\norm{\psi_i}_\infty + \norm{\phi_i}_\infty) , 1},
\end{equation}
where $K_F$ is the Lipschitz constant of $F$ and $\phi_i, \psi_i$ are POD bases of $\rho$.

\paragraph{Approximation Rates}
Combining the complexity measures \cref{eq: C0,eq: C1} and \cref{eq: C2} discussed above,
we define the approximation spaces, 
which consists of targets that have finite complexity measures:
\begin{equation}\label{eq: approximation space}
    \begin{aligned}
        \Cab
        =
        \{
            \bm H \in \C:
            C_0(\bm H) + C_1\depen{\alpha}(\bm H) 
            + C_2\depen{\beta}(\bm H,k)  < \infty, \, k\geq 1
        \}.
    \end{aligned}
\end{equation}
One can understand this as an analog of the classical Sobolev spaces
    for polynomial approximation but adapted to the Transformer hypothesis space.
Note that $\Cab$ is also dense in
general continuous target space $C(\X\depen{E}, \Y)$ when $n$ sufficiently large (See \cref{appen: density of target space}).
We are now ready to present the main result of this paper.
\begin{theorem}[Jackson-type approximation rates for the Transformer]\label{thm: approximation rate}
    Consider sequences with a fixed length $\tau$.
    Suppose the target $\bm H \in \Cab$ has a representation in the form \cref{eq: target form} with $F\in C([0,1]^{n'}, \R)$ and $f\in C(\I, [0,1]^{n'})$.
    Let the hidden dimension of the Transformer be $n=2*m_h+m_v$, with $m_v\geq n'$.
    Then, we have:
    \begin{align}
        \inf_{\bhh\in\calH\depen{m}}\int_{\I} \sum_{t}^\tau \abs{H_t(\bm x)-\hat H_t(\bm x)} d\bm x \leq 
        \tau^2 C_0(\bm H)\left(
     \frac{C_1\depen{\alpha}(\bm H)}{{m_{h}}^{2\alpha - 1}} +  \frac{ C_2\depen{\beta}(\bm H, m_h)}{m_{\ff}^\beta}\cdot (m_h)^{\beta+1} \right), \nonumber
     \end{align}
    where $m = (n,m_h,m_v,m_{\ff})$ is the approximation budget, and $C_0, C_1\depen{\alpha}$$C_2\depen{\beta}$
    are complexity measures of $\bm H$ defined in \eqref{eq: C0}, \eqref{eq: C1} and \eqref{eq: C2}, respectively.
    The proof is presented in \cref{appen: proof of main theorem}.
\end{theorem}
Here, 
$m_h$ denotes the hidden dimension of the attention mechanism, 
essentially the size of the query and key vectors, 
while $m_{\ff}$ represents the complexity measure of the pointwise feed-forward network employed in the Transformer.
We first consider how the attention mechanism $\hat\rho(x(t),x(s))$ approximates the temporal
coupling term $\rho$.
By setting $[W_Q]_k  = e_k$ and $[W_K]_k = e_{k+m_h+1}$,
we can write $\hat \rho$ into
$
        \hat\rho(x(t),x(s)) = (W_Q \hat f(x(t)))^\top W_{K} \hat f(x(s))= \sum_{k=1}^{m_h}  \hat\phi_k( x(t)) \hat\psi_k( x(s)), 
$
where  $\hat\phi_k, \hat\psi_k: \I \to \R$ for $k\in[m_h]$ are components of $\hat f$ and are
represented by the feed-forward network.
This suggests that the approximation of $\rho$ by $\hat \rho$ is a low-rank approximation as discussed in \cref{eq: pod}.
When we increase $m_h$, 
the first term in the error rate that considers the POD decomposition decreases since there are more basis functions included. 
However, in scenarios where $m_{\ff}$ remains unchanged,
the second term in the error bound will increase. 
It is important to highlight that this error increment pertains only to the error bound, 
not to the best approximation error, 
which does not necessarily become worse when increasing the approximation budget.
When $m_h$ increases, 
there are more basis functions that need to be approximated by the feed-forward components; thus, 
the error bound converges when both $m_h \to \infty$ and $m_{\ff}^\beta/m_h^{\beta+1} \to \infty$.
In \cref{sec: Synthetic Examples}, we provide a synthetic example to illustrate the above discussion.

The complexity measure $C_2(\cdot)$ accounts for the quality of approximation using feed-forward networks.
On the other hand, $C_1(\cdot)$ is the most interesting part, which concerns the internal structure of the attention mechanism.
It tells us that if a target can be written in form \eqref{eq: target form} with $\rho(u,v)$,
then it can be efficiently approximated with small $m_h$ if $\rho(u,v)$ has fast decaying singular values.
This decay condition can be understood as effectively a low-rank condition on $(u,v)\mapsto \rho(u, v)$,
analogous to the familiar concept for low-rank approximations of matrices.
These observations provide important insights into the structure, bias, and limitations
of the Transformer.

\section{Numerical Demonstrations}\label{sec: 5 experiments}

In this section, 
we present numerical examples to demonstrate our approximation rate results. 
We begin with synthetic examples, 
where we can specify the singular value decay patterns, 
thus validating the Jackson-type approximation rates in \cref{thm: approximation rate}. 
Then, we turn to a practical example involving a Vision Transformer (ViT) model applied to the CIFAR10 dataset,
where we do not have direct access to the singular values.
We will discuss methods to estimate the singular value decaying pattern.

\subsection{Practical Example}\label{sec: real example}
We next analyze a practical example, 
focusing on the Vision Transformer (ViT) model with the CIFAR10 dataset. 
In this scenario, we do have direct access to the temporal coupling term $\rho$ of the target relationship.
However, 
we demonstrate that as we train a sequence of models with increasing attention dimension $m_h$, 
the singular values of $\hat \rho$ converges, 
implying the decaying pattern of $\rho$.
Our first step is to estimate the rank of $\hat \rho$ from sampled data.
Subsequently, we examine the singular value decay pattern, and the error changes when altering the attention head size $m_h$.

\paragraph{Estimate the Rank of $\hat\rho$}
\begin{figure}[htbp]
    \centering
    \subfigure[Estimated singular values]
    {%
    \label{fig:vit_singularvalue_decay}
    \includegraphics[width=0.3\columnwidth]{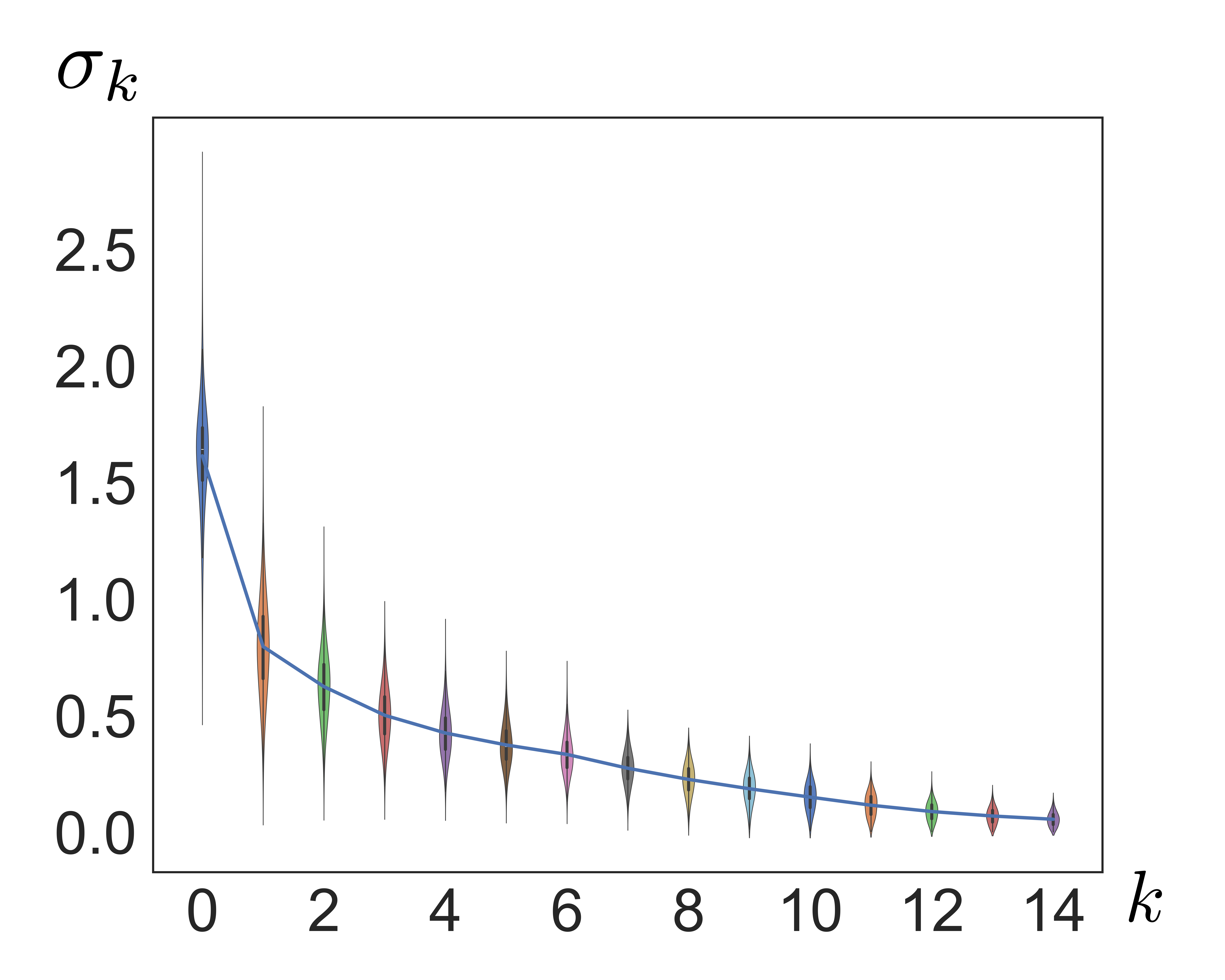}
    }
    \subfigure[Singular values for different $m_h$]
    {%
    \label{fig:vit_singular}
    \includegraphics[width=0.3\columnwidth]{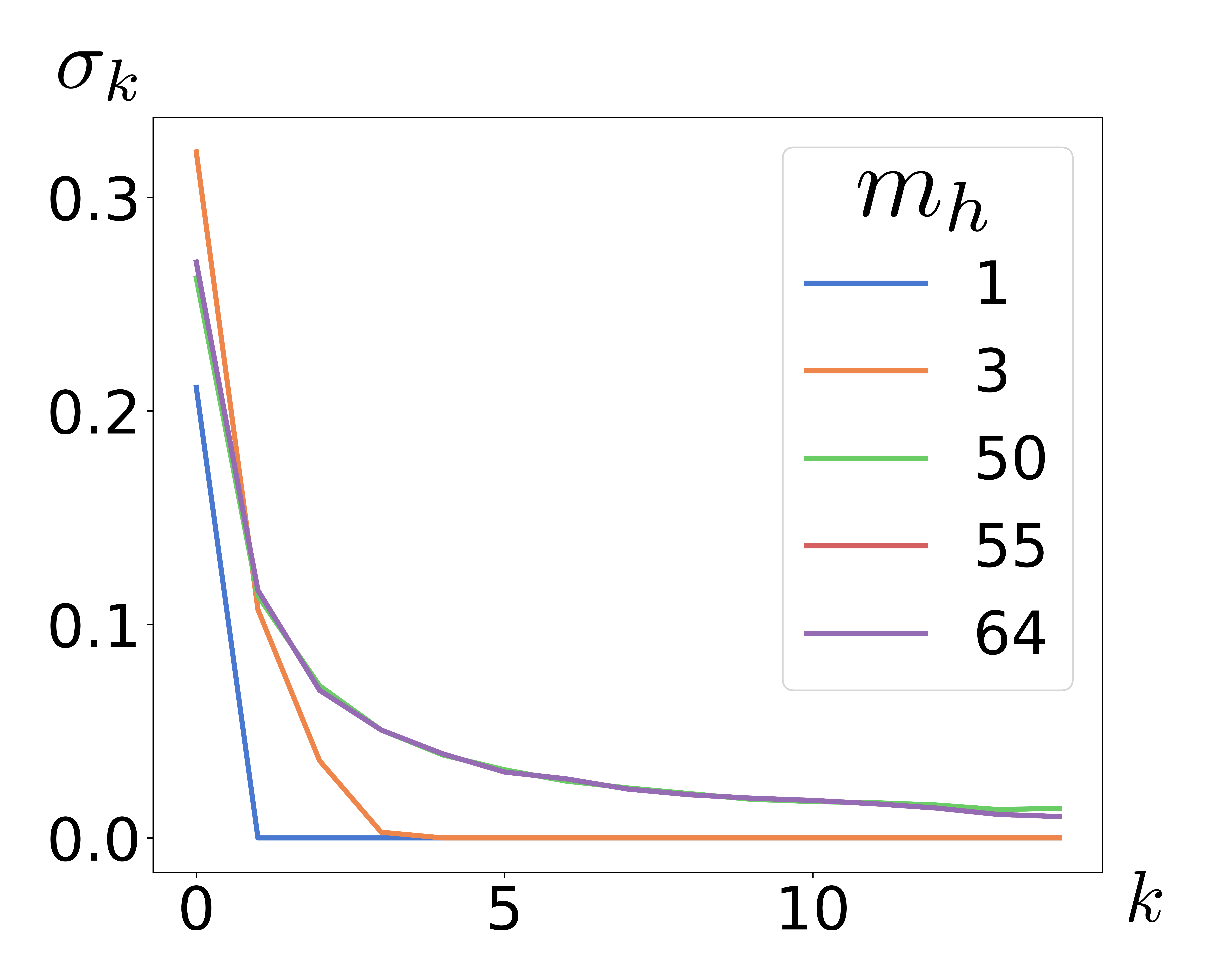}
    }
    \subfigure[Training Errors]
    {%
    \label{fig:vit_error}
    \includegraphics[width=0.3\columnwidth]{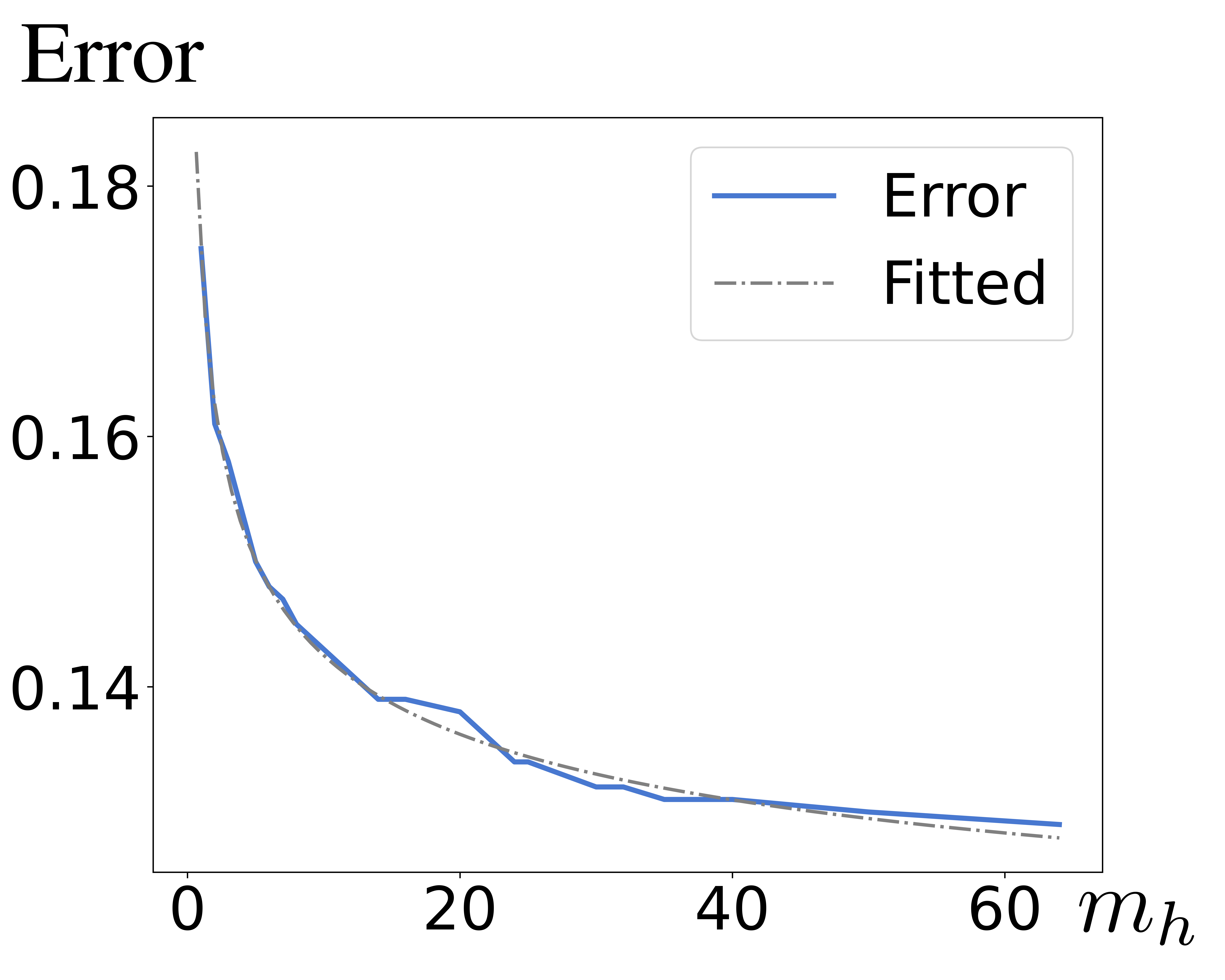}
    }
    \caption{
        (a) is the estimated singular value of the attention matrix over a set of inputs for $m_h=64$. 
        The violin plot shows the distribution of each singular value.
        (b) plots the estimated singular values for models with different $m_h$.
        (c) plots the training error against $m_h$.
    }
\end{figure}

Given a trained model, 
it is hard to directly compute the rank of $\hat\rho(u,v)$. 
Instead, we examine the attention matrix ${\bm{\hat\rho}}(\bm x) \in \R^{\tau\times\tau}$ 
where $[{\bm{\hat\rho}}(\bm x)]_{t,s} = \rho(x(t), x(s))$. 
This is essentially a sample from $\hat\rho$. 
By analyzing the rank of these sampled matrices, 
we can estimate the rank of $\hat\rho$. 
According to \citet{braun2005.SpectralPropertiesKernel}, 
the singular values of the matrix $\bm {\hat\rho}(\bm x)$ 
exhibit the same decay pattern as those of $\hat\rho(u,v)$. 
Consequently, 
we can approximate the singular values of $\hat\rho(u,v)$ by averaging the singular values of $\bm {\hat\rho}(\bm x)$ across various inputs.
In \cref{fig:vit_singularvalue_decay}, 
we numerically estimate the singular values of $\hat \rho$ in the ViT-B\textunderscore16 model \cite{dosovitskiy2020.ImageWorth16x16}. 
We observe that the singular values tend to be more concentrated,
suggesting that we can effectively estimate the rank of $\hat\rho(u,v)$ by evaluating it at sampled inputs.

We next estimate the singular value decaying pattern of the temporal coupling term $\rho$ in the target.
In \cref{fig:vit_singular}, we analyze the singular values of the $\hat\rho$ for ViT models with varying values of $m_h$. 
We estimate the singular values by averaging over a set of inputs.
We observe that as $m_h$ increases and reaches a sufficiently large value, 
the decaying pattern of the singular values starts to converge. 
As an example, 
\cref{fig:vit_singular} plots the estimated singular values for the first head of the last layer for models with different $m_h$. 
This convergence suggests that the rank of the attention matrix becomes representative of the actual rank of $\rho$ in the target.
Consequently, it suggests the presence of a low-rank structure in real-world datasets.
In \cref{fig:vit_error}, 
we plot the training error as an estimation of the approximation error. 
The plot reveals that the error decreases as $m_h$ increases, 
following a power law decaying pattern $O\big(1/{m_h^{0.27}}\big)$. 
This indicates that the target indeed exhibits a low-rank structure, 
and the pattern of error decay aligns with our approximation rate presented in \cref{thm: approximation rate}. 
This illustrates that while our theorem is formulated based on a simplified scenario, 
the phenomenon of low rank is also observable in real-world datasets and models.
\section{Comparison with RNN}\label{sec: 6 compare}
Based on the approximation results in \cref{thm: approximation rate}, 
this section presents a comparison between the Transformer and RNN.
Our comparison centers on how each model is affected by the alterations in the temporal structures of sequential relationships. We primarily investigate two distinct temporal structures: 
temporal ordering and temporal mixing, 
as they have varying impacts on the performance of each architecture. 
Our approach involves manipulating the temporal structures within the sequential relationships and evaluating how each architecture is affected by these changes.
Proof are presented in \cref{appen:6.1} and \cref{appen:6.2}.

\subsection{Temporal Ordering Structure}\label{sec: 6.1}
We first analyze how the Transformer and RNN handle the change in temporal ordering of the sequential relationship.
Empirically, 
it is observed that in certain contexts, 
the ordering of inputs does not significantly impact the relationships. 
For instance, in NLP applications, 
altering the word order in a sentence often does not drastically change its meaning. 
Similarly, in the ViT model, 
the arrangement of image patches typically does not substantially affect the outcome. 
However, 
in specific applications such as time series analysis, 
temporal ordering plays a crucial role, 
as the target relationships are governed by the ordering of the sequence.
To alter the temporal order of target $\bm H$, 
we apply a fixed permutation $p$ and define the new target as  $ \tilde H_t(\bm x \circ p) = H_t(\bm x )$.
This permutes the input but keeps the output unchanged, 
resulting in a change to the temporal ordering.

We start by considering the RNN, 
which is affected by the change in temporal ordering.
As demonstrated in \citet{li2020.CurseMemoryRecurrent},
a linear RNN is represented by the form $\hat H_t(\bm x) = \sum_s c^\top e^{W s}U x(t-s)$.
When we employ it to approximate linear targets represented by $H_t(\bm x) = \sum_s\rho(s)x(t-s)$, 
the complexity measures of the RNN $C_{\text{RNN}}(\bm H)$ is determined by both decay speed and magnitude of $\rho(s)$.
We use $\Cabrnn$ to denote the approximation space for the RNN.
(See \cref{appen:rnn}).
We next show that the RNN is affected by the change in temporal ordering.
\begin{proposition}\label{prop:6.1}
    Let $\bm H\in \Cabrnn$ and $p$ be a fixed permutation,
    such that there exists $t'$ with $p(t')>t'$.
    Suppose $\tilde{\bm H}$ is defined by $ \tilde H_t(\bm x \circ p) = H_t(\bm x )$. 
    Then $\tilde{\bm H} \notin \Cabrnn$.
\end{proposition}
This proposition shows that the altered target $\tilde{\bm H}$ no longer belongs to the approximation space for RNN.
This lies in the fact that RNN can only handle causal targets,
where $y(t)$ does not depend on future inputs.
However, the permuted target $\tilde{\bm x}$ is no longer causal,
making the RNN incapable of learning such relationships.
We next show that the Transformer, 
in contrast, 
remains unaffected by changes in temporal ordering. 
\begin{proposition}\label{prop:6.2}
    Let $\bm H\in \Cab$ and $p$ be a fixed permutation. 
    Suppose $\tilde{\bm H}$ is defined by $ \tilde H_t(\bm x \circ p) = H_t(\bm x )$. 
    Then $\tilde{\bm H}$ have same complexity measures with $\bm H$
    for the complexity measures $C_0,C_1,C_2$ defined in \cref{eq: C0},\eqref{eq: C1} and \eqref{eq: C2}.
\end{proposition}
This proposition shows that the altered target $\tilde{\bm H}$ maintains the same complexity measures as the original target. 
This observation implies that the Transformer's approximation capability is not affected by alterations in temporal ordering.
This point is further substantiated by our testing of the Transformer on real-world datasets,
as illustrated in \cref{tab: tab2}. 
We consider the ViT model on the CIFAR10 dataset and the base Transformer structure \cite{vaswani2017.AttentionAllYou} on the WMT2014 English-German dataset. 
To alter the temporal ordering of the target relationship, 
we fix a permutation of indices denoted as $p$ and apply it to all inputs while keeping the output unchanged. 
The experimental results provide evidence that the performance of the Transformer is unaffected by the temporal ordering of the target relationships.
\begin{table}[htbp]
    \centering
    \begin{tabular}{ccc}
    \toprule
    & CIFAR10 (\textit{Acc.}) & ENG-DE (\textit{BLUE}) \\
    \cline{1-3}
    Original          & \multicolumn{1}{c}{$0.98$}   &  $26.85$                   \\
    Altered          & \multicolumn{1}{c}{$0.96$}   & $25.91$                   \\ \bottomrule
    \end{tabular}
    \caption{Numerical results of the Transformer on original and altered targets.
    The altered target is constructed by permuting the entire input dataset while keeping the output unchanged.}
    \label{tab: tab2}
\end{table}

\subsection{Temporal Mixing Structure}\label{sec: 6.2}
In this section, 
we explore how mixing elements from different time indices can affect the performance of the Transformer and RNN. 
Temporal mixing refers to the idea of blending information from various time indices, 
often through operations like convolution, 
which can alter the temporal structure.
To illustrate, 
consider a linear relationship represented as $H_t(\bm x) = \sum_s\rho(s)x(t-s)$.
Now, 
imagine we apply a weighted sum of the input sequence $\bm x$ using a filter $\theta$ to get an altered input.
We denote this operation as
$\tilde {\bm x} = \theta\Conv \bm x$, 
where $(\theta\Conv \bm x) [t] = \sum_{s=0}^{l-1} \theta(s)\bm x(t+s)$.
This mixes the information in the sequence from different time indices.
In this case, 
we define the altered target as $\tilde H_t(\bm x  ) =   H_t(\theta\Conv\bm x  ) = \sum_s\tilde\rho(s) x(t-s)$,
where $\tilde \rho  = \theta\Conv\rho$ is the altered kernel (See \cref{appen:6.2}).
This scenario often arises in signal processing and data analysis. 
For example, 
when dealing with a noisy input signal $\bm x$, 
one capproach is to apply a moving average filter to smooth it out. 
This filtering process involves mixing information from different time indices, 
which can significantly affect the behavior of target relationships.
In the following sections, 
we will explore how such temporal mixing affects the temporal structures in sequential relationships.
We begin by examining the linear RNN. 
\begin{proposition}\label{prop:6.3}
    Let $\bm H\in \Cabrnn$ associated with representation $\rho$,
    such that $\abs{\rho(s)}\leq e^{-\frac{s}{\gamma}}$ for some $\gamma>0$.
    Let $\theta$ be a length $l$ filter such that $\norm{\theta}_1 \leq 1$. 
    Suppose $\tilde{\bm H}$ is defined by $\tilde H_t(\bm x  ) =   H_t(\theta\Conv\bm x  )$.
    Then we have both  $C_{\text{RNN}}({\bm H})\leq \gamma$ and $C_{\text{RNN}}(\tilde{\bm H}) \leq \gamma$,
    where  $C_{\text{RNN}}$ is the complexity measure of the RNN (See \cref{eq:rnn complexity}).
\end{proposition}
This proposition shows that under temporal mixing $\theta$ with certain conditions, 
the complexity measure of the altered target $\tilde{\bm H}$ does not worsen.
As a result, 
the performance of the RNN is unaffected in such cases.
However, 
performance of the Transformer can be impacted by temporal mixing in the target relationship.
Consider $\bm H\in\Cab$ and an altered target $\tilde H_t(\bm x  ) =   H_t(\theta\Conv\bm x  )$.
This alteration can affect the complexity measures for $\tilde F, \tilde f$ and $\tilde \rho$ in $\tilde{\bm H}$.
In \cref{appen: transformer worse}, we present an example where the rank of $\tilde\rho$ increases compared to $\rho$,
leading to a performance drop in the Transformer. 
Numerical results in \cref{tab: tab2} also indicate that temporal mixing influences the Transformer's approximation capability. 
Preprocessing the input to mitigate this temporal mixing could potentially enhance performance, 
we leave this as a future direction.
The following numerical examples demonstrate the above discussions.
We conduct numerical experiments to substantiate the discussions above. 
These experiments focus on a linear target relationship defined as $H_t(\bm x) = \sum_{s}e^{-s}x(t-s)$.
To manipulate the temporal ordering, we apply permute the function $e^{-s}$.
Additionally, 
for introducing temporal mixing, 
the input is convolved with a randomly generated filter. 
Both the RNN and the Transformer are employed to learn these targets. 
Detailed experimental settings are discussed in the \cref{appen: experiment setting}.
The results are presented in Table \ref{tab: tab1}.
The bold font indicates a performance drop in the architecture under the corresponding modification of temporal structures. 
It is observed that the performance of the Transformer remains unaffected by changes in the temporal ordering structure; 
however, it is impacted by temporal mixing.
In contrast, the RNN exhibits opposite behaviors. 
This highlights that neither architecture consistently outperforms the other, 
as they each adapt to different types of temporal structures.
\begin{table}[htbp] 
    \centering
    \small 
    \begin{tabular}{ccccc}
    \toprule
    \multirow{2}{*}{} & \multicolumn{2}{c}{Temporal Ordering} & \multicolumn{2}{c}{Temporal Mixing} \\
    \cline{2-5}
                      & \multicolumn{1}{c}{\textbf{RNN}} & Trans. & \multicolumn{1}{c}{RNN} & \textbf{Trans.} \\ \midrule
    Original          & \multicolumn{1}{c}{{$\boldsymbol{1.02e{-7}}$}}   &  $2.18e{-5}$          & \multicolumn{1}{c}{$1.02e{-7}$}   & $\boldsymbol{2.18e{-5}}$           \\
    Altered          & \multicolumn{1}{c}{$\boldsymbol{3.57e{-2}}$}   & $2.51e{-5}$           & \multicolumn{1}{c}{$1.58e{-7}$}   & $\boldsymbol{2.39e{-4}}$         \\ \bottomrule
    \end{tabular}
    \caption{The table presents the MSE values for both RNN and the Transformer under different alterations of temporal structures.}
    \label{tab: tab1}
\end{table}
\section{Conclusion}
In this paper, 
we have developed Jackson-type approximation rates for the Transformer in a simplified setting. 
An important outcome of our work is the identification of complexity measures and approximation space defined in \cref{eq: approximation space}. 
This, in turn, 
enables us to derive explicit Jackson-type approximation rate results in \cref{thm: approximation rate}. 
Our rate results suggest that the Transformer performs well when the temporal coupling of the target exhibits a low-rank pattern.
The experiments presented in \cref{sec: 5 experiments} showcase the existence of low-rank patterns in real-world applications.
Furthermore, 
the comparisons with RNNs underscore the specific temporal structures that each model handles efficiently.
Future research directions involve extending the analysis to multi-headed attention and deeper Transformers.
Additionally, 
we aim to investigate the potential benefits of removing temporal mixing in the input to enhance the performance of the Transformer.


\newpage
\section*{Acknowledgement}
This research is part of the programme DesCartes and is supported by the National Research Foundation, Prime Minister’s Office, Singapore under its Campus for Research Excellence and Technological Enterprise (CREATE) programme.
QL acknowledges support by the National Research Foundation, Singapore, under the NRF fellowship (project No. NRF-NRFF13-2021-0005).

\bibliography{reference.bib}
\bibliographystyle{packages/nips}

\newpage
\appendix
\onecolumn

\section{Proof of Theorems}\label{appen: permutation equivariance}
\subsection{Permutation Equivariance of the Hypothesis Space $\calH$}
Let $p:[\tau]\to[\tau]$ be a permutation.
Let's consider an $\bhh\in\calH$ with permuted inputs:
\begin{align}
    \hat H_t(\bm x \circ p) &=
    \hat F\Bigg(\sum_{s=1}^\tau\sigma[(W_Q \hat f(x(p(t))))^\top W_{K} \hat f(\cdot)](p(s)) \cdot W_V\hat f(x(p(s)))\Bigg), \\
    \intertext{Since the permutation of index $s$ does not affect the sum, we have}
    &=  \hat F\Bigg(\sum_{s=1}^\tau\sigma[(W_Q \hat f(x(p(t))))^\top W_{K} \hat f(\cdot)](s) \cdot W_V\hat f(x(s))\Bigg), \\
    &= \hat H_{p(t)}(\bm x).
\end{align}
This shows that the Transformer hypothesis $\calH$ is permutation equivariant.

\subsection{Density of the Target Space $\C$}\label{appen: density of target space}
In this section, 
we show that the target space $\C$ defined in \cref{eq: target form} is dense in the general continuous target space $C(\X\depen{E}, \Y)$.
This ensures that the space defined is general enough to approximate arbitrary continuous targets.

To begin with, 
we first introduce the following representation theorem for multivariate functions.
\begin{theorem}[Kolmogorov Representation Theorem]\cite{ostrand1965.DimensionMetricSpaces}
    \label{appen: Kolmogorov}
    Let $\I_1, \dots, \I_\tau$ be compact $d$ dimensional metric spaces. 
    Then there are continuous functions $\psi_{qs}: \I_s \to [0,1]$ and continuous function $g_q: [0,\tau] \to \R$, 
    such that any continuous function $f: \prod \I_i \to \R$ can be represented as
    \begin{align}\label{eq: Kolmogorov}
        f(x_1, \dots, x_\tau) = \sum_{q=0}^{2\tau d} g_q \left(  \sum_{s=1}^\tau\psi_{qs}(x_s) \right).
    \end{align}
\end{theorem}
This theorem states that any $\tau$ variable functions can be decomposed into superpositions of one variable function.

Now,
we discuss how the pointwise functions on a sequence can apply different mappings on different time indices.

\begin{proposition}\label{appen: pointwise}
    Suppose we have $\tau$ variables in disjoint domains,
    where $x_1\in\I_1, \cdot, x_\tau\in\I_\tau$ and $\I_1,\dots,\I_\tau$ are all disjoint.
    We consider the following vector-valued function 
    \begin{equation}
        f(x_1,\dots,x_\tau) = (f_1(x_1),\dots,f_\tau(x_\tau)),
    \end{equation}
    where $f_s:\I_s\to \R$ for $s\in[\tau]$.
    We can indeed define a pointwise function $g:\bigcup\I_s\to\R$ to represent $f$.
    Since all $\I_s$ are disjoint, 
    we can define a piecewise function $g$,
    where $g(x_s) = f_s(x_s)$ holds for all $s \in [\tau]$.
\end{proposition}
This proposition demonstrates that for a sequence input $\bm{x}$ with appropriate positional encodings, 
where $x(i)$ and $x(j)$ belong to disjoint sets, 
a pointwise function can represent distinct mappings for these elements.
We are now ready to present the following theorem.

\begin{theorem}
    Consider $d$-dimensional, length $\tau$ input space $\X\depen{E}$ with position encoding added. 
    Then, 
    for any $\bm H\in C(\X\depen{E}, \Y)$, 
    there exists continuous functions $F\in C([0,1]^n, \R)$, $f\in C(\I, [0,1]^n)$ and 
    $\rho \in C(\I \times \I, \R)$ such that for all $t\in [\tau]$ we have 
    \begin{equation}
        H_t(\bm x) = F\left(\sum_{s=1}^\tau \sigma[\rho(x(t),x(\cdot))](s)f(x(s)) \right),
    \end{equation}
    where $n=2\tau d + 1$ and $\sigma$ is the softmax function. 
    \begin{proof}
        Based on the representation for continuous function \cref{appen: Kolmogorov},
        we can decompose $ {\bm H}$ into 
        \begin{align}
             H_t(\bm x) = \sum_{q=0}^{2\tau d} g_q^{(t)}\left(\sum_{s=1}^\tau\psi_{q,s}({x_s})\right),
        \end{align}
        where $\psi_{q,s}^{(\tau)}: \I_s\to [0,1]$ and $ g_q^{(t)}:[0,\tau]\to\R$ are continuous functions.
        We next construct $F,f$ and $\rho$ to make ${\bm H}$ satisfy this form.
        Firstly, 
        since $\I_s$ are disjoint we can define proper piecewise function $\rho$ such that
        $\sigma[\rho(x(t),\cdot)](x(s))= 
        \begin{cases}
            \frac{2}{\tau+1} & t=s\\
            \frac{1}{\tau+1} & t\neq s
        \end{cases}$,
        which simplifies \cref{eq: target form} to 
        \begin{equation}\label{atten eq: dense}
            H_t(\bm x) = F\left( \frac{1}{\tau+1}\Big(f(x(t)) + \sum_{s=1}^\tau f(x(s)) \Big) \right).
        \end{equation}
        Next, based on \cref{appen: pointwise}, 
        we let the pointwise function $f:\I\to [0,1]^n$ to apply different mappings for each $x(s)$, such that
        \begin{align}
           f: x(s)
            \mapsto 
            \begin{pmatrix}
                {\hat\psi_{0,s}}({x_s}),\dots,{\hat\psi_{2\tau d,s}}({x_s}), b_s
            \end{pmatrix},
        \end{align}
        where $b_s\in[0,1]$ are different constants.
        We then have that
        \begin{align}\label{appen eq: inner sum}
            f(x(t)) + \sum_{s=1}^\tau f(x(s)) = 
            \begin{pmatrix}
                {\hat\psi_{0,t}}({x_t}) + 
                {\sum\limits_{s=1}^\tau\hat\psi_{0,s}}({x_s}),
                \dots,
                {\hat\psi_{2\tau d,t}}({x_t})+
                \sum\limits_{s=1}^\tau{\hat\psi_{2\tau d,s}}({x_s}), b_t+\sum\limits_{s=1}^\tau b_s 
            \end{pmatrix}.
        \end{align}
        Here, $b_t$ performs as a shifting to make the range of \cref{appen eq: inner sum} disjoint for different $t$.
        Again, based on \cref{appen: pointwise}, 
        we can define $F$ to have individual mappings for different $t$.
        We first define $F_1:[0,\tau+1]^n\to[0,\tau]^n$ to be
        \begin{align}
            F_1: u(t) \mapsto 
            \begin{pmatrix}
                u_1(t) - {\hat\psi_{0,t}}({x_t}) ,
                \dots,
                u_{2\tau d+1}(t) - {\hat\psi_{2\tau d,t}}({x_t}), u_n(t) - \sum\limits_{s=1}^\tau b_s 
            \end{pmatrix},
        \end{align}
        where $u_i$ denotes the $i$-th dimension of $u$.
        Next, define $F_2:[0,\tau]^n\to \R$ to be 
        \begin{align}
            F_2: u(t) \mapsto \sum_{q=0}^{2\tau d} g_q^{(t)}\left(u_q(t) \right).
        \end{align}
        Finally, let $F(u) = F_2\circ F_1((\tau+1) u) $.
        Substitute into \cref{atten eq: dense} we get that
        \begin{align}
            H_t(\bm x) = \sum_{q=0}^{2\tau d} g_q^{(t)}\left(\sum_{s=1}^\tau\psi_{q,s}({x_s})\right).
        \end{align}
    \end{proof}
\end{theorem}
This theorem shows that the target space $\C$ is, in fact, a representation of the general continuous target space $C(\X\depen{E}, \Y)$.
In particular, since $\I$ is a compact metric space, 
the complexity measures of each component $g\depen{s}_q$ and $\psi_{q,s}$ used in the construction of $H$ have finite complexity measures $C_0, C_1$ and $C_2$. This implies that the approximation space $\Cab$ is also dense.

\subsection{Proof of Jackson-type approximation rate \cref{thm: approximation rate}}\label{appen: proof of main theorem}

Now, we present the proof of the Jackson-type approximation rates.
\begin{lemma} The following inequality will be used to prove the theorem.
    \begin{align}
        \abs{ab-\hat a \hat b}& \leq \abs{a}|b-\hat b| + |\hat b||a-\hat a|\\
        &\leq    \abs{a}|b-\hat b| + \abs{b}\abs{a-\hat a} + \abs{a-\hat a}|b-\hat b|
    \end{align}
\end{lemma}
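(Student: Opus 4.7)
The plan is to prove the two inequalities sequentially, both via the add--subtract trick combined with the triangle inequality. The lemma is elementary, so the approach is direct and no deep machinery is needed; the main task is simply to pick the right decomposition for each bound.

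For the first inequality, I would write $ab - \hat a \hat b = a(b-\hat b) + \hat b(a - \hat a)$ by adding and subtracting $a\hat b$ inside the expression. Applying the triangle inequality and the multiplicativity of the absolute value immediately yields $\abs{ab - \hat a \hat b} \leq \abs{a}\abs{b - \hat b} + \abs{\hat b}\abs{a - \hat a}$, which is the first line of the claimed estimate. Note that the alternative decomposition $ab - \hat a \hat b = b(a - \hat a) + \hat a(b - \hat b)$ would give a symmetric bound, but the form chosen matches the statement.

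For the second inequality, I would only need to control the factor $\abs{\hat b}$ in terms of $\abs{b}$. The triangle inequality gives $\abs{\hat b} \leq \abs{b} + \abs{\hat b - b} = \abs{b} + \abs{b - \hat b}$. Substituting this into the first inequality and distributing produces $\abs{a}\abs{b - \hat b} + \abs{b}\abs{a - \hat a} + \abs{a - \hat a}\abs{b - \hat b}$, which is exactly the second line.

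There is no real obstacle here: both steps reduce to a single application of the triangle inequality after an algebraic rearrangement. The only thing worth being careful about is consistency of which factor (i.e. $\hat b$ vs.\ $b$) is kept in each term so that the final expression matches the stated form verbatim.
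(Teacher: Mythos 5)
Your proof is correct and is the standard argument (add--subtract to get the first line, then bound $\abs{\hat b}\leq\abs{b}+\abs{b-\hat b}$ for the second); the paper states this lemma without proof, and your derivation is exactly the implicit justification intended.
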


\begin{proof} \label{proof: rate} Proof of \cref{thm: approximation rate}
    Since there are various components in the model, we will consider each of them separately.
    Firstly let's consider the approximation of $F$. 

    Let 
    \begin{equation}
        h_t(\bm x) = \sum_s^\tau\sigma(\rho(x(t),x(s)))f(x(s))
    \end{equation}
    and 
    \begin{equation}
        \hat h_t(\bm x) = \sum_s^\tau\sigma(\hat \rho(x(t),x(s)))\hat f(x(s))
    \end{equation}
     we have
    \begin{align}\label{eq: appen_F}
        \abs{F(h)-\hat F(\hat h)} & = \abs{F(h)-F(\hat h) + F(\hat h) - \hat F(\hat h)} \\
        & \leq K_F\abs{h - \hat h} + \abs{F(\hat h)-\hat F(\hat h)}.
    \end{align}

    Let's consider $\abs{F(\hat h)-\hat F(\hat h)}$ first.
    Since  $\I$ is a compact domain and $\hat h$ is continuous, its range $\hat h(\I)$ is a compact set. For all $x\in\hat h(\I)$ we have $\abs{F(\hat h(\bm x))-\hat F(\hat h(\bm x))} \leq {\|F-\hat F\|}_{L^\infty(\hat h(\I))}$. This implies that
    \begin{align}
     \sum_{t}^\tau\mathop{\int}_{\I}  \abs{F(\hat h_t(\bm x))-\hat F(\hat h_t(\bm x))}d \bm x 
             &\leq \sum_{t}^\tau\mathop{\int}_{\I}d \bm x \cdot \norm{F-\hat F}_{L^\infty(\hat h(\I))}\\
             & = \tau\|\I|\norm{F-\hat F}_{L^\infty(\hat h(\I))}
    \end{align}
   where $|\I|$ denotes the volume of $\I$. Recall that $\hat h_t(\bm x) = \sum_s^\tau\sigma(\hat \rho(x(t),x(s)))\hat \rho(x(s))$, we may assume the parameters of $\hat f$ is bounded such that $\hat f(x(s))\subset [0,1]$. By the property of Softmax we have $\hat h(\I)\subset [0,1]$, thus,
   \begin{align}
    \leq \tau|\I|\norm{F-\hat F}_{L^\infty([0,1])}\leq\tau \norm{F-\hat F}.
   \end{align}

    Next, let's consider the first term of \eqref{eq: appen_F},
    \begin{align}
        \abs{h - \hat h} &\leq \sum_s^\tau \abs{\sigma(\rho(x_t,x_s))f(x_s) - \sigma(\hat \rho(x_t,x_s))\hat f(x_s)} \\
        &\leq \sum_s^\tau\abs{f}\abs{\sigma(\rho(x_t,x_s)) - \sigma(\hat \rho(x_t,x_s))} + \abs{\sigma(\hat \rho(x_t,x_s))} \abs{f - \hat f} \\
        \intertext{Since Softmax is Lipschitz continuous and bounded by $1$, we have that }
        &\leq  \sum_s^\tau \abs{f}\abs{\rho - \hat \rho} +  \sum_s^\tau \abs{f - \hat f}.
    \end{align}
The second term is an approximation with neural networks 
\begin{align}
    \int \sum_{t,s}^\tau\abs{f(x_s)-\hat f(x_s)} d \bm x \leq \tau^2\norm{f - \hat f}.
\end{align}

Now we remain to derive a bound for $\abs{\rho-\hat \rho}$.
Let $\tilde \rho$ be a $m_h$ term truncation of POD expansion of $\rho$, then
\begin{align}
    \abs{\rho-\hat \rho} &= \abs{\rho - \tilde \rho + \tilde \rho - \hat \rho}\\
    & \leq \abs{\rho - \tilde \rho} + \abs{\tilde \rho - \hat \rho}.
\end{align}
The first term is the error estimates using POD, which says that 
\begin{align}
    \int \sum_{t,s}^\tau\abs{\rho(x_t,x_s)-\tilde \rho(x_t,x_s)}^2 d \bm x &= \tau^2 \norm{\rho-\hat \rho}_2^2\\
    &\leq \tau^2 \sum_{i=m_h+1}^\infty \sigma_i^2.
\end{align}

The second term is again approximation with neural works, which we have
\begin{align}
    \sum_{i=1}^{m_h}\abs{\phi_i\psi_i - \hat\phi_i\hat\psi_i} \leq \sum_{i=1}^{m_h}\abs{\phi_i}|\psi_i-\hat \psi_i| + \abs{\psi_i}\abs{\phi_i-\hat \phi_i} + \abs{\phi_i-\hat \phi_i}|\psi_i-\hat \psi_i|
\end{align}
Thus, combining all the inequalities above, we have that
\begin{align}
    &\int \sum_{t}^\tau \abs{H_t(\bm x) - \hat H_t(\bm x)} d \bm x \leq  \\
    K_F\sup{\abs{f}}\tau^2&\left(  \sum_{i=m_h+1}^\infty \sigma_i^2 + \sum_{i=1}^{m_h}\sup{\abs{\phi_i}}\norm{\psi_i-\hat\psi_i} +  \sup{\abs{\psi_i}}\norm{\phi_i-\hat\phi_i} + \norm{\phi_i-\hat\phi_i}\norm{\psi_i-\hat\psi_i}\right) \\ &+  \tau^2 \norm{F-\hat F} + \tau^2\norm{f- \hat f}
\end{align}
By substituting the complexity measure of $\bm H$ to R.H.S., we have that 
\begin{align}
   R.H.S. \leq  \tau^2 \C_0(\bm H) \left( \frac{C_1\depen\alpha(\bm H)}{m_h^{2\alpha-1}}+\frac{C_2\depen{\beta}(\bm H)}{m_{\ff}^\beta}\cdot(m_h)^{\beta+1} \right).
\end{align}
\end{proof}
This completes the proof.

\section{Extra discussions}

\subsection{Numerical Examples of the Target Form \cref{eq: target form}} \label{appen: target form}

In this section, we provide numerical examples that follow the target form \cref{eq: target form}:
\begin{equation}
    H_t(\bm x) = F\left(\sum_{s=1}^\tau \sigma[\rho(x(t),x(\cdot))](s)f(x(s)) \right). \nonumber
\end{equation}
This form, in fact, is not a unique representation; in other words, for $\bm H_1 = \bm H_2$ we may not have $F_1=F_2, f_1=f_2$ and $\rho_1 = \rho_2$.
However, we have the following propositions that characterize the invariant properties of the target form.
\begin{proposition}\label{prop: target intrinsic properties}
    Suppose $\bm H_1$ and $\bm H_2$ are in the form of \cref{eq: target form}, then the following properties hold. 
    \begin{enumerate}
        \item  If $\bm H_1(\bm x) = \bm H_2(\bm x) $ holds for all $\bm x$, then $F_1\circ f_1 = F_2 \circ f_2$.
        \item  Suppose $\sigma$ is hard-max function. 
            For non constant $\bm H_1, \bm H_2 \in \tilde{\C}$, if $\bm H_1(\bm x) = \bm H_2(\bm x) $ holds for all $\bm x$, then $\displaystyle\argmax_{x(s)}[\rho_1(x(t),x(s))] = \argmax_{x(s)}[\rho_2(x(t),x(s))]$ if the argmax is unique and $F_1\circ f_1,F_2\circ f_2$ are injections.

        \item  Suppose $\sigma$ is softmax function. 
        For non constant $\bm H_1, \bm H_2$, if $\bm H_1(\bm x) = \bm H_2(\bm x) $ holds for all $\bm x$ and $\rho_1, \rho_2$ are unbounded, 
        then $\displaystyle\argmax_{x(s)}[\rho_1(x(t),x(s))] = \argmax_{x(s)}[\rho_2(x(t),x(s))]$ if the argmax is unique and $F_1\circ f_1,F_2\circ f_2$ are injections.
    \end{enumerate}
    \begin{proof}
        By considering a constant input sequence $\bm x = (x,\dots,x)$,
        we get $F_1\circ f_1(x) = F_2\circ f_2(x)$ for all $x$. 
        
        Suppose $\displaystyle\argmax[\rho_1(x(t),x(s))] = c_1$ and $\displaystyle\argmax[\rho_2(x(t),x(s))] = c_2$, then $\bm H_1(\bm x)=\bm H_2(\bm x)$ implies $F_1\circ\rho_1(c_1)=F_2\circ\rho_2(c_2)$. By the injection assumption, we have $c_1=c_2$, which implies the argmax are equal.
        
        When the normalization is softmax, 
        since $\rho$ is unbounded, we consider a sequence of inputs $\{\bm x\depen{i}\}$ such that the $\rho(x\depen{i}(t),x\depen{i}(s))$
        goes to infinity, 
        which became the same as the previous hard-max case.
        \end{proof}
\end{proposition}
Although these properties need to satisfy certain conditions to be theoretically correct,
we empirically found they generally hold true in real applications. 

Next, 
we demonstrate these concepts through numerical examples by
utilizing the Transformer model to learn different targets.
We consider the following simplified version of a single layer Transformer
\begin{align}
    \hat H_t(\bm {x})= \hat F\left(W_V \hat f(\bm x) \cdot \sigma[(W_Q \hat f(\bm x))^\top W_K \hat f(\bm x)] 
    \right).
\end{align}
Here, $\bm x\in\R^{d\times \tau}$ is the input, $\hat F, \hat f$ are two nonlinear mappings applied column-wise, and $\sigma$ is the softmax function applied column-wise to the matrix.

\paragraph{Nearest point to a set}

In this example, 
we consider two sets of points $U\subset R^d$ and $V\subset R^d$ as inputs.
For each point $u_t\in U$, we aim to determine the nearest point from set $V$.
More specifically, we define an input sequence $x\in \X\subset\R^{2d\times\tau}$ in the form $x(t) = (u_t,v_t) \in R^{2d}$,
where $u_t$ and $v_t$ are two $d$-dimensional points belonging to point set $U$ and $V$, respectively.
The output is defined as 
\begin{equation}
    y(t) = \argmin_{v\in V}{|{u(t)-v}|}.
\end{equation}

This can be rewritten as 
\begin{align}\label{eq: cdist target}
    y(t) &= \sum_{s=1}^\tau\sigma_H[-|{u(t)-v(\cdot)}|](s)v(s), \\
\end{align}
where $\sigma_H: \R^\tau\to\R^\tau$ is the hard-max function.
Note this equation conforms with the form in \cref{eq: target form}
with $F\circ f$ begin identity and $\rho(u,v) = | u-v |$.
For the numerical example, we consider $U, V \subset \R^2$ and each have $6$ points,
and train a Transformer model to learn this target and examine the learned model.

\begin{figure}[htbp]
    \centering
    \subfigure[Attention matrix] 
    {\label{fig:example cdist a}
    \includegraphics[width=0.31\columnwidth]{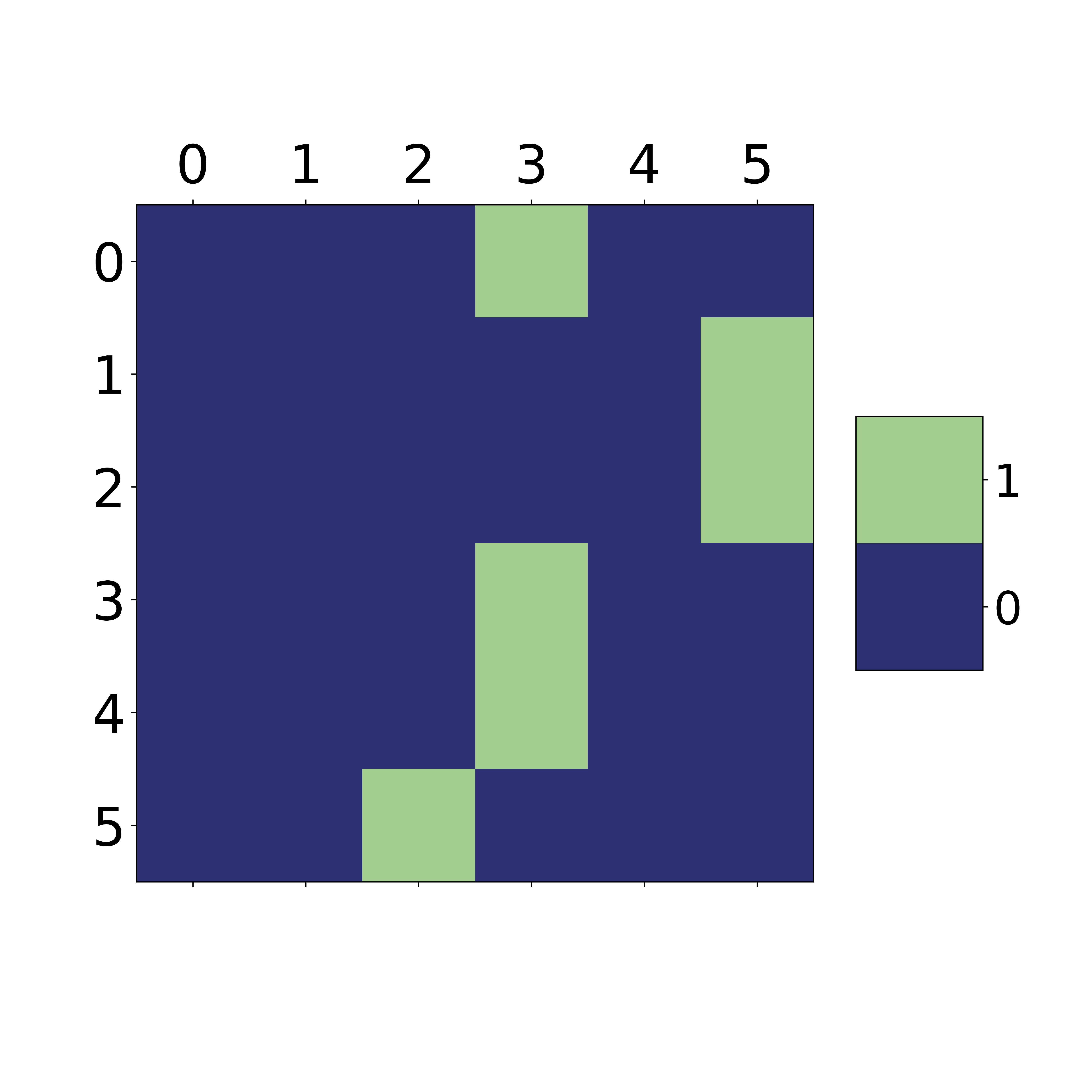}
    }
    \subfigure[Graph recovered by the model]  
    {\label{fig:example cdist b}
    \includegraphics[width=0.31\columnwidth]{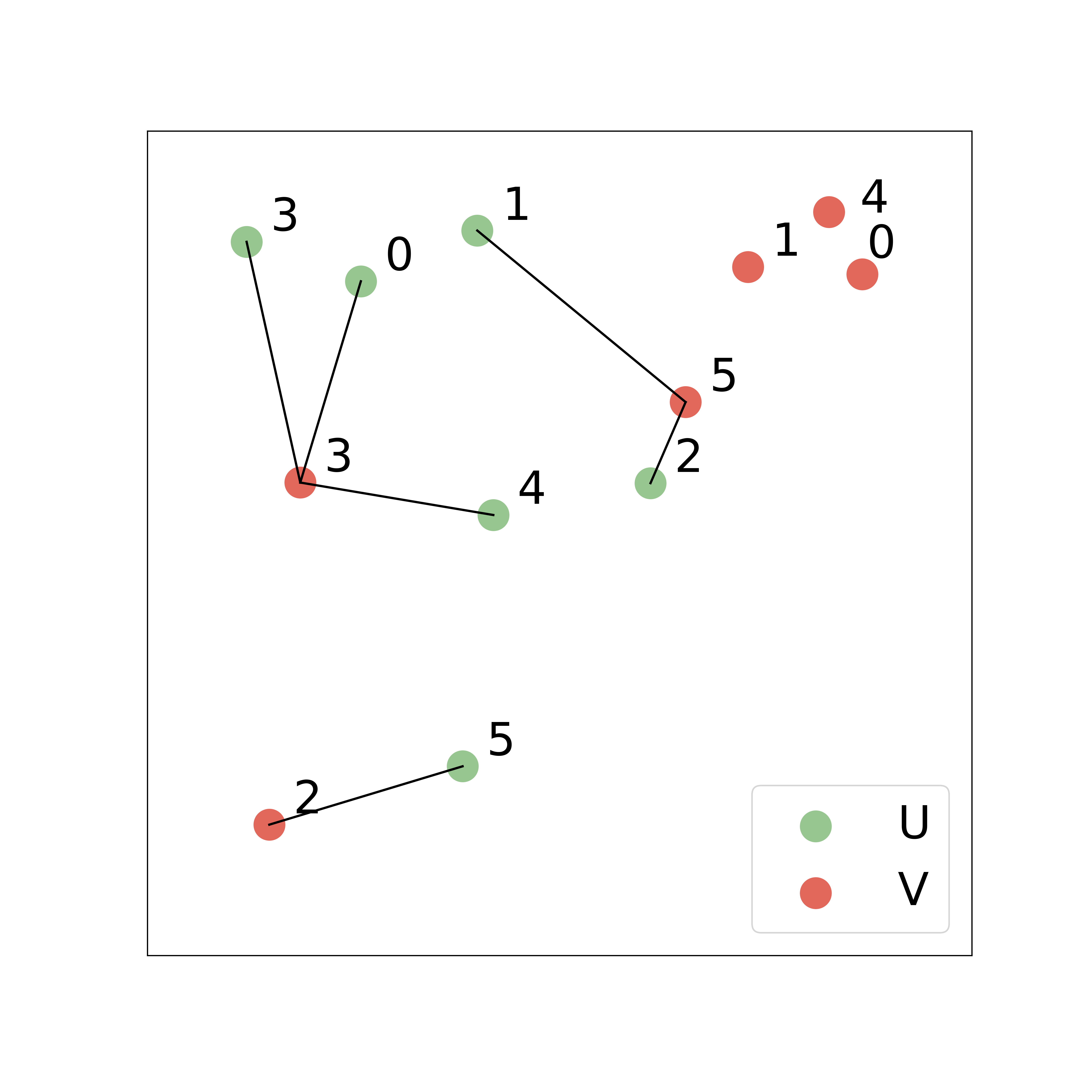}
    }
    \subfigure[Scatter plot of $\hat F\circ\hat f$ ]  
    {\label{fig:example cdist c}
    \includegraphics[width=0.31\columnwidth]{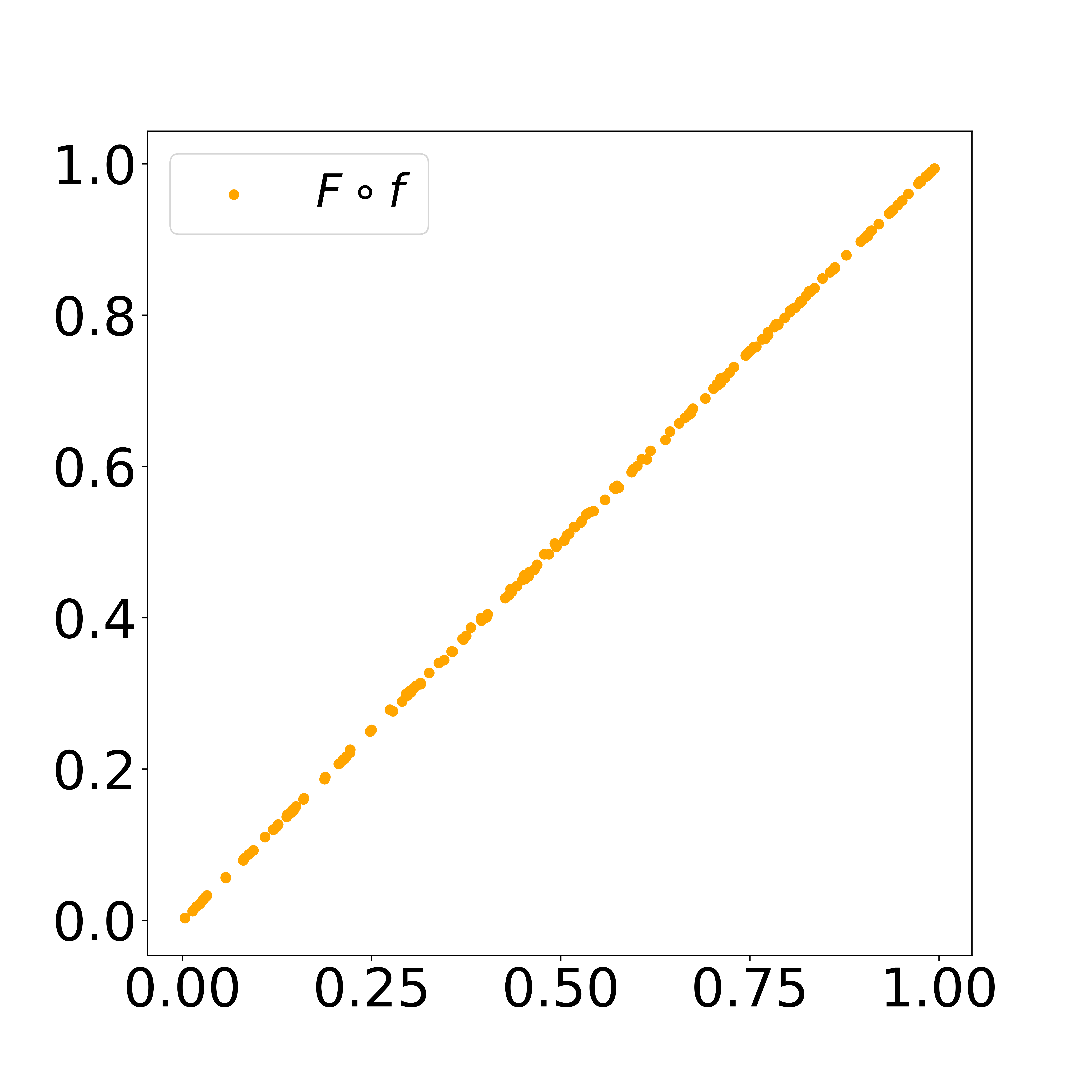}
    }
    \caption{For Figures (a) and (b)
    we examine a particular instance of the input $\bm x$.
    Figure (a) plots the attention matrix $A$, while Figure (b) illustrates the learned relationships,
    with green points and red points representing points from set $U$ and $V$, respectively.
    Figure (c) is the scatter plot of $F\circ f(\bm x)$ for randomly generated inputs $\bm x$.
    }

    \label{fig:example cdist}
\end{figure}

In \cref{fig:example cdist a}, we plot the attention matrix $A$ of a particular instance of the input $\bm x$.
For  \cref{fig:example cdist b}, we visualize the point set $U,V$ based on their coordinates
and establish connections according to the values in $A$.
Specifically, two points $u_i$ and $v_j$ are connected based on the value of $A_{ij}$.
This visualization reveals exactly that each green point in set $U$ is linked to the nearest point in $V$, 
demonstrating the model's ability to recover the target relationship accurately.
Furthermore, 
for the target form \cref{eq: cdist target}, $F\circ f$ in this case is an identity function. 
As shown in \cref{fig:example cdist c},
$\hat F \circ \hat f$ is also found to learn an identity function. 
This observation is in accordance with \cref{prop: target intrinsic properties}, 
despite the theoretical assumptions that may not fully apply.

\paragraph{Weighted average by weight}
In this scenario, 
we are given sequences of point masses as inputs. 
Our objective is to compute the center of gravity for each sequence. 
These point masses are in $\R^d$ space, each with a mass $m \in \mathbb{R}$.

Consider a sequence containing $\tau$ points. 
We extend this to a sequence of $\tau + 1$ points by adding an extra point $x_{\text{pred}} \notin \X$ at the beginning of the sequence, serving as a prediction token. 
Consequently, the input sequence is represented as:

\begin{equation}
    \bm x = ( x_{\text{pred}},
\begin{pmatrix}
x_1 \\
m_1
\end{pmatrix}, \dots,
\begin{pmatrix}
x_\tau \\
m_\tau
\end{pmatrix}
).
\end{equation}
The corresponding output sequence is formulated as:
\begin{equation}
    \bm y = ( |\bar{x}|,
|x_1|, \dots,
|x_\tau|
),
\end{equation}
where $x_{\text{pred}}$ is mapped to $\bar{x}$, and the other points remain unchanged. 
$|\cdot|$ denotes the Euclidean norm.
The center of gravity $\bar{x}$ is defined by the equation:
\begin{equation}
    \bar{x} := \sum_{s=1}^\tau \frac{m_s}{\sum_{s'} m_{s'}} x_s.
\end{equation}

We may consider several sequence segments of point masses, 
denoted as $\bm x_1$ to $\bm x_n$, 
which may vary in length.
These sequences are concatenated to form an input sequence $\bm x$.
Correspondingly, the output $y$ concatenates $\bm y_1$ to $\bm y_n$.

As an example, consider the following input-output pair:
\begin{equation}
\bm x = ( x_{\text{pred}},
\begin{pmatrix}
x_1 \\
m_1
\end{pmatrix},
\begin{pmatrix}
x_2 \\
m_2
\end{pmatrix},
\begin{pmatrix}
x_3 \\
m_3
\end{pmatrix},
x_{\text{pred}},
\begin{pmatrix}
    x_4 \\
    m_4
    \end{pmatrix},
    \begin{pmatrix}
    x_5 \\
    m_5
    \end{pmatrix}
).
\end{equation}

\begin{equation}
    \bm y = \Bigg( \left|\frac{\sum_{s=1}^3 m_s x_s}{\sum_{s'=1}^3m_{s'}}\right|,
    |x_1|,
    |x_2|,
   | x_3|,
\left|\frac{\sum_{s=4}^5 m_s x_s}{\sum_{s'=4}^5m_{s'}}\right|,
    |x_4|,
        |x_5|
        \Bigg).
    \end{equation}

In this example, the output sequence $\bm y$ includes the norm of the computed centers of gravity for each concatenated segment of the input sequence 
    $\bm x$, 
where each segment is separated by the prediction token $x_{\text{pred}}$.
This can be formulated as target form \cref{eq: target form} such that
\begin{equation}
    y(t) = \begin{cases}
         \norm{ \sum\limits_{s\in\I_{x}} \frac{m_s}{\sum\limits_{s'\in\I_{x}} m_{s'}} x(s)  } &  x(t)=x_{\text{pred}} \\[10pt]
         \norm{ \sum\limits_{s=1}^\tau \mathbf{1}_{s=t} \, x(s)  }  & \text{otherwise}
    \end{cases},
\end{equation}
where $\I_{x}$ denotes the index of the sequence segment corresponding to $x_{\text{pred}}$.
The weighted output here depends on the prediction token,
such that it only takes the weighted average within the corresponding sequence.
For numerical illustration, we consider input $\bm x$ of length 10, 
containing sequence segments of length at least 2.
We apply the single-layer Transformer model to learn this target.

\begin{figure}[ht]
    \centering
    \subfigure[Attention matrix] 
    {\label{fig:example center a}
    \includegraphics[width=0.31\columnwidth]{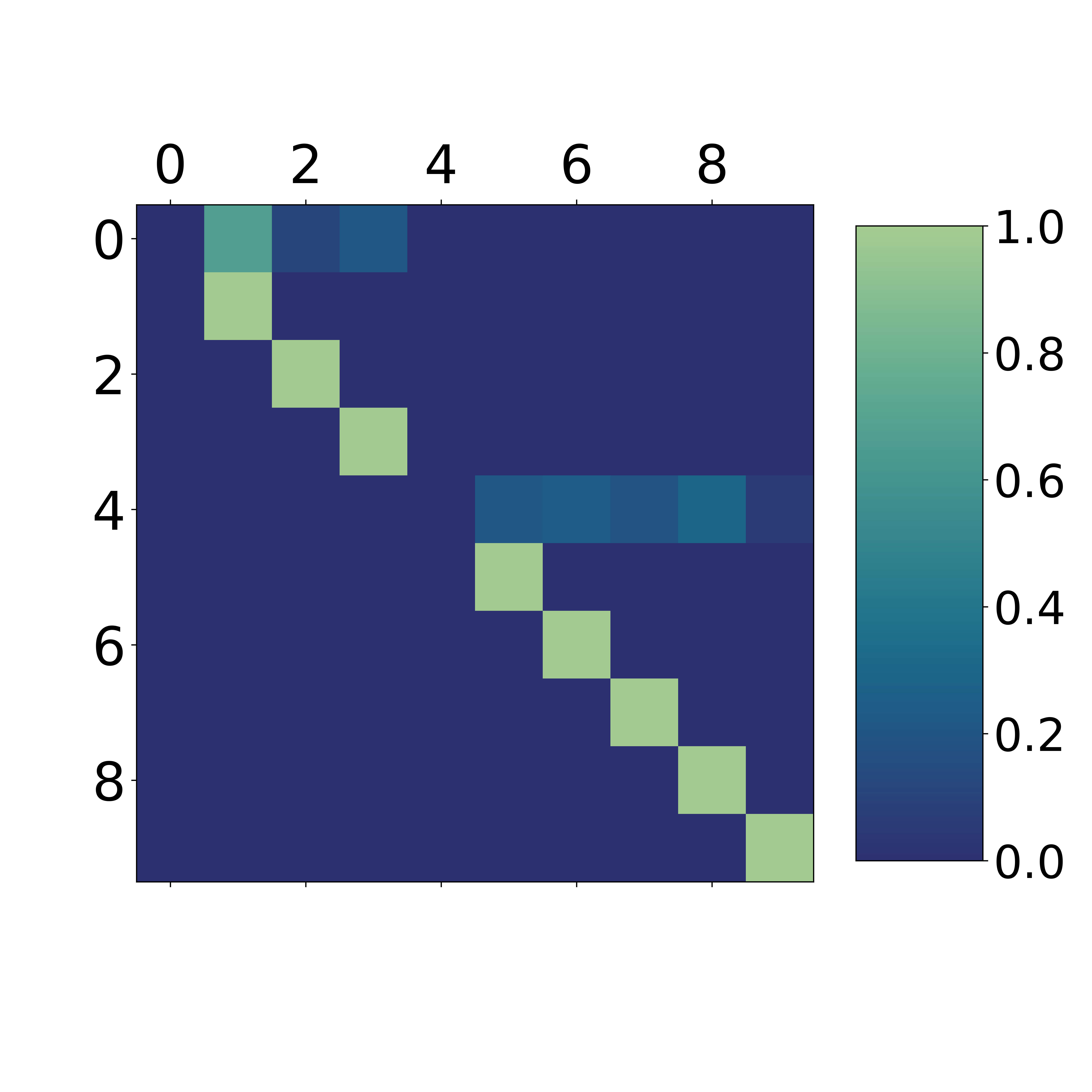}
    }
    \subfigure[Graph recovered by the model]  
    {\label{fig:example center b}
    \includegraphics[width=0.31\columnwidth]{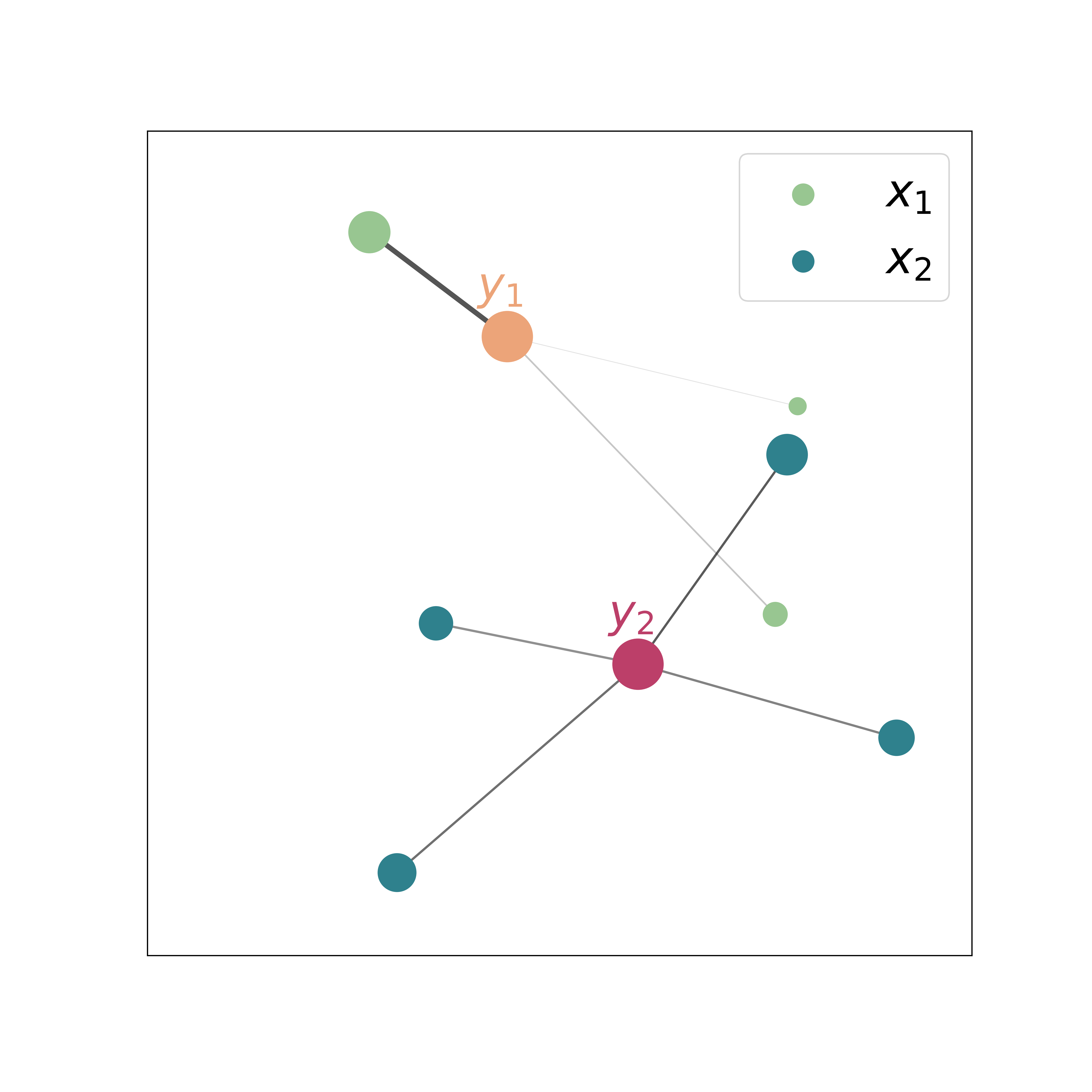}
    }
    \subfigure[Contour plot of $\hat F\circ\hat f$ ]  
    {\label{fig:example center c}
    \includegraphics[width=0.31\columnwidth]{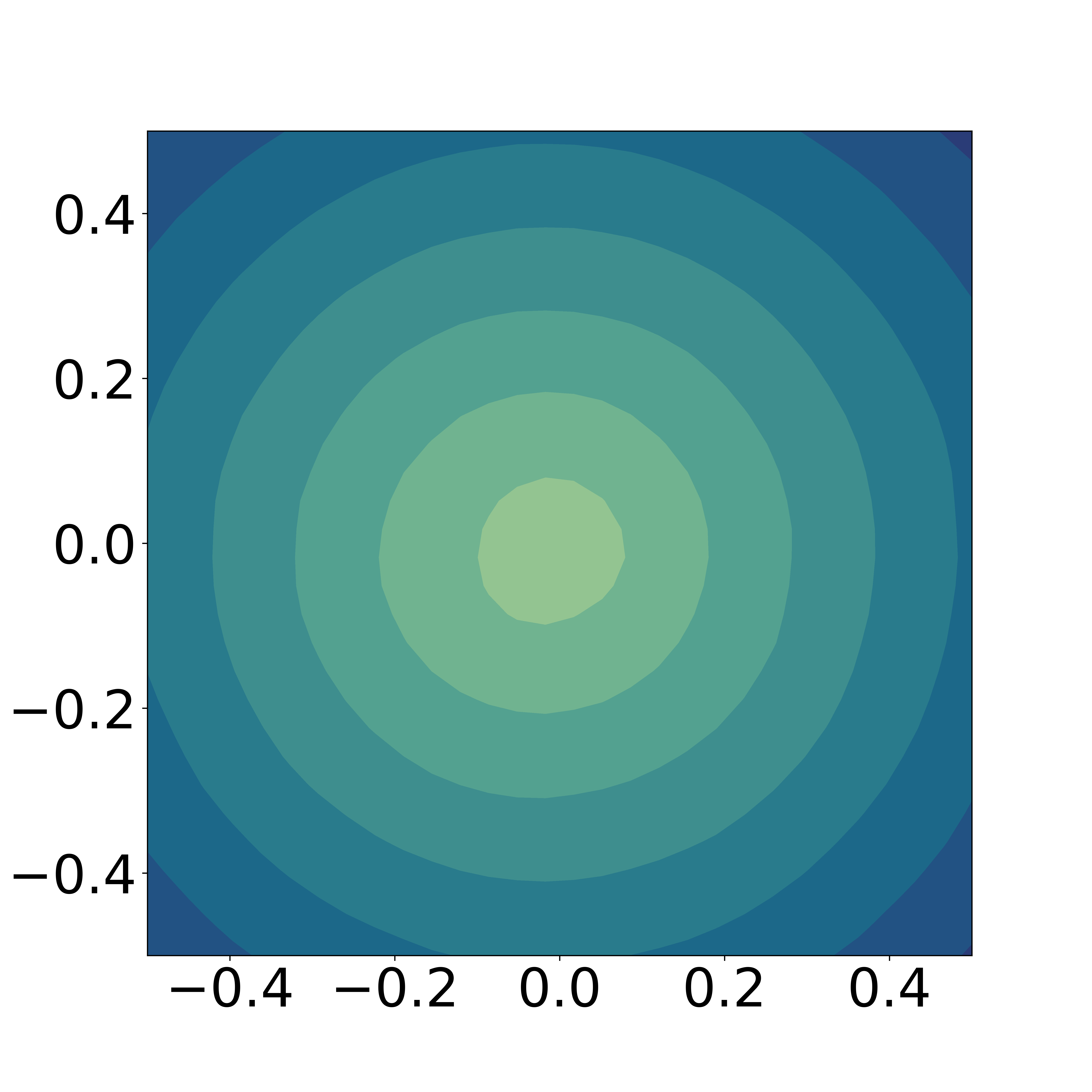}
    }
    \caption{
        Figure (a) plots the attention matrix $A$, 
        while Figure (b) is the illustration of the learned relationship. 
        In this instance, 
        there are two sequences, 
        $\bm x_1$ and $\bm x_2$, 
        each connected to their respective predictions. 
        The color of the connecting lines represents the corresponding values in $A$. 
        Figure (c) presents the contour plot of $F\circ f(\bm x)$, 
        generated for a set of random inputs $\bm x$.
    }
    \label{fig:example center}
\end{figure}

In \cref{fig:example center}, 
we analyze an input $\bm x$ comprising two sequence segment, 
$\bm x_1$ and $\bm x_2$, with lengths of 3 and 5, respectively. 
In this scenario, $x(0)$ and $x(5)$ are prediction tokens. 
The attention matrix behaves as expected: 
the outputs $y(0)$ and $y(5)$ are influenced predominantly by the points within their respective sequences,
and the values are approximately the normalized weight.
Furthermore, since the remaining outputs are the same as the inputs, 
the attention matrix values for these elements are confined to the diagonal,
which means that the output at that point is only determined by itself.

Furthermore, in \cref{fig:example center c} we observe that $\hat F\circ\hat f$ can successfully recover $F\circ f$ in the target
which is a norm of the $\R^2$ vector.

\paragraph{Weighted average by selection}

The previous calculated the weighted average by specifying the weight of each point.
Now, we consider another way to calculate a weighted average where we specify a subset of points we are going to take the average.
Suppose we have a length $\tau$ sequence of $\R^d$ vectors $\bm v \in \R^{d\times\tau}$.
We also have a sequence of vectors $\bm s\in[\tau]^{k\times\tau}$ consisting of sequences of indices.
We stack $\bm v$ and $\bm s$ to form the input sequence $\bm x \in \R^{{(d+k)}\times\tau}$.
This extends the qSA task proposed in \citet{sanford2023.RepresentationalStrengthsLimitations}.

The output $y(t)$ is defined as 
\begin{equation}
    y(t) = \frac{1}{k}\sum_{s\in\bm s(t)} v_s.
\end{equation}

This can be written in the target form \cref{eq: target form} such that 
$F\circ f$ is identity.
Depending on the frequency of $s$ appears in the index vector $\bm s(t)$,
we have that
\begin{equation}\label{eq: qsa rho}
    \rho(x(t),x(s)) \in  \Big\{0,\frac1k,\frac2k,\dots,1\Big\} 
\end{equation}
As an example, 
we have the following input-output pairs.
\begin{equation}
    \begin{gathered}
        \bm v = (
    v_1,v_2,v_3
    ),\\
    \bm s = \Big(
\begin{pmatrix}
    1\\
    1\\
    2
\end{pmatrix},
\begin{pmatrix}
    1\\
    2\\
    3
\end{pmatrix},
\begin{pmatrix}
    3\\
    3\\
    3
\end{pmatrix}
    \Big). \\
    \bm y =\frac{1}{3}\Big( 2v_1+v_2,\, v_1+v_2+v_3, \, 3v_3 \Big).
    \end{gathered}
\end{equation}
In our numerical experiments, 
we set $\tau=8$, 
$d=2$, 
and $k=3$. 
This configuration implies that both the input and output sequences have a length of $8$, 
and the output $y(t)$ is computed as the average of three inputs, 
as indices designated by $\bm s(t)$. 
For this particular setting we have $\rho\in  \{0,\frac12,\frac23,1\} $.
Again, we use a single-layer transformer model to learn this target.

\begin{figure}[ht]
    \centering
    \subfigure[Attention matrix] 
    {\label{fig:example qsa a}
    \includegraphics[width=0.31\columnwidth]{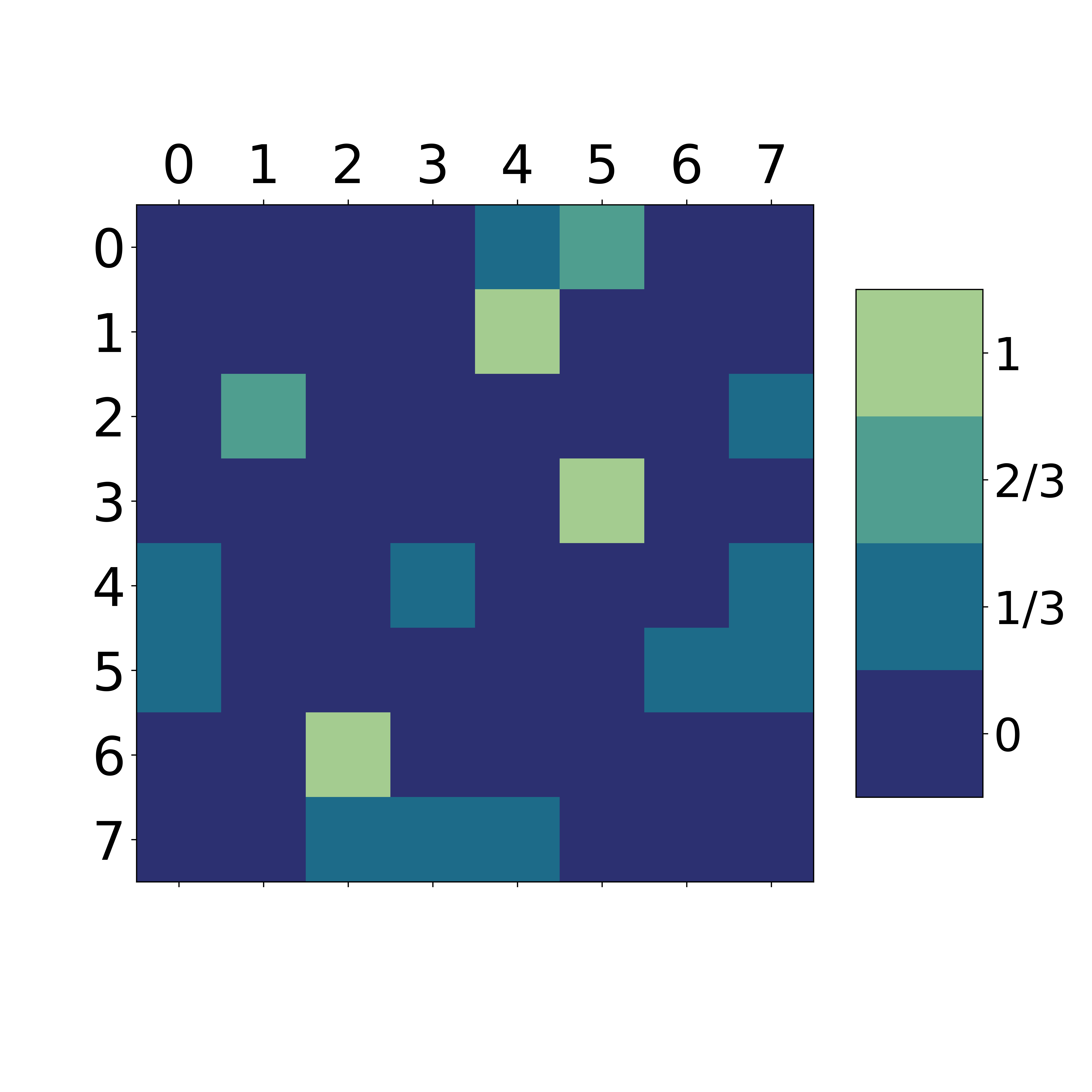}
    }
    \subfigure[Graph recovered by the model]  
    {\label{fig:example qsa b}
    \includegraphics[width=0.31\columnwidth]{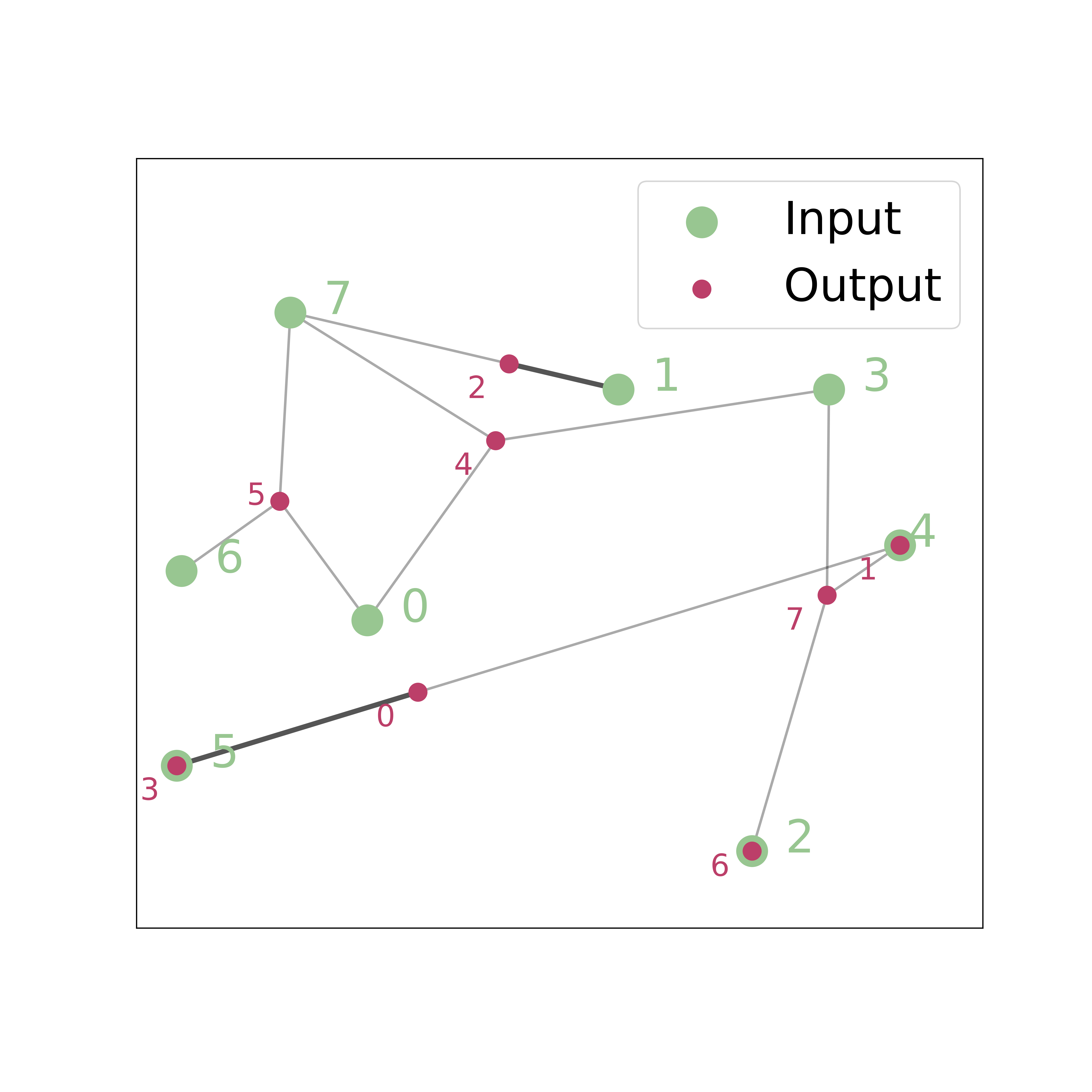}
    }
    \subfigure[Contour plot of $\hat F\circ\hat f$ ]  
    {\label{fig:example qsa c}
    \includegraphics[width=0.31\columnwidth]{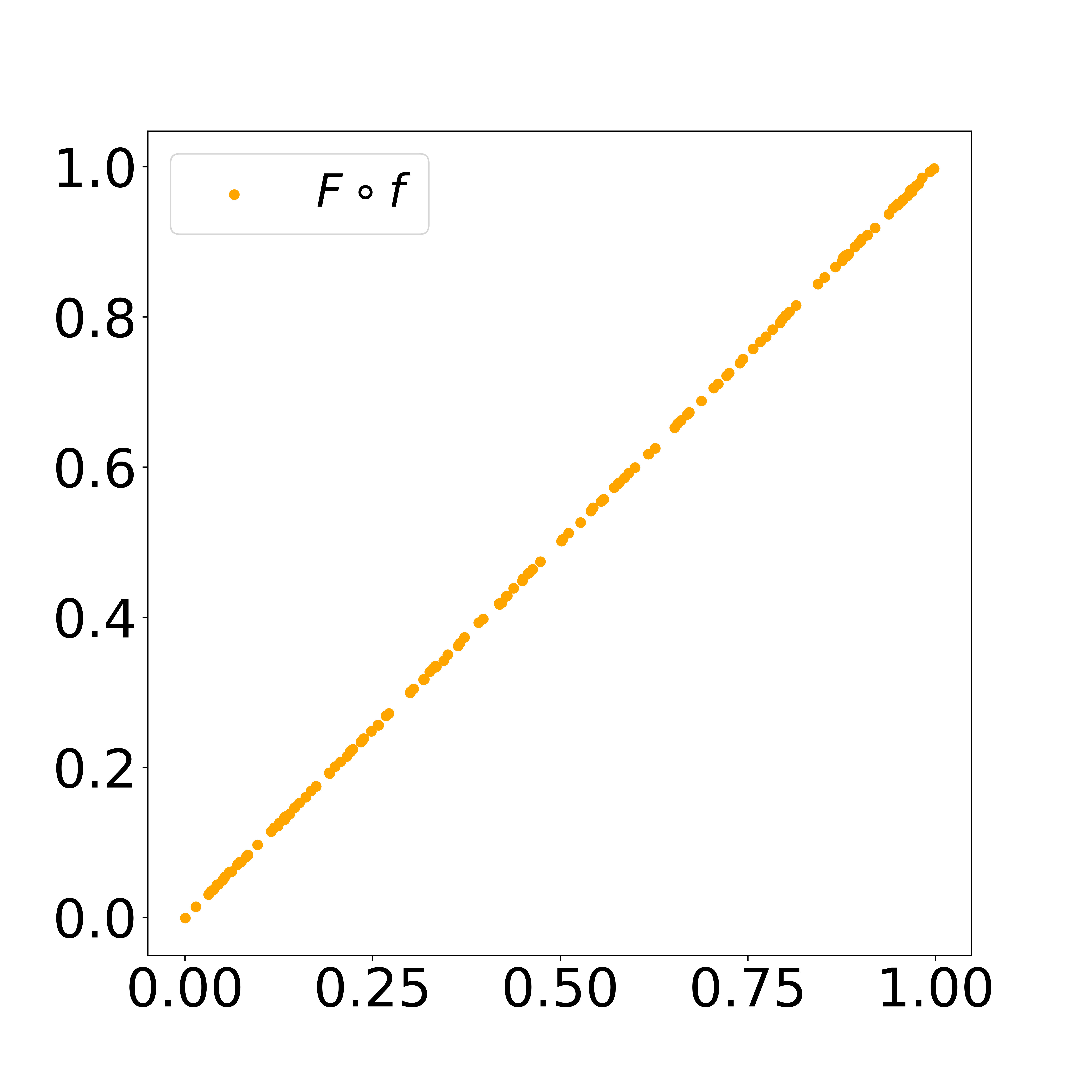}
    }
    \caption{
        Figure (a) plots the attention matrix $A$, 
        while Figure (b) illustrates the learned relationship. 
        The points in different colors refer to the input and output, respectively.
        They are connected based on the value of $A$.
        Figure (c) presents the scatter plot of $F\circ f(\bm x)$, 
        generated for a set of random inputs $\bm x$.
    }
    \label{fig:example qsa}
\end{figure}
In \cref{fig:example qsa} we examine a particular input,
where 
\begin{equation}
    \bm s = \Big(
\begin{pmatrix}
    4\\
    5\\
    5
\end{pmatrix},
\begin{pmatrix}
    4\\
    4\\
    4
\end{pmatrix},
\begin{pmatrix}
    1\\
    1\\
    7
\end{pmatrix},
\begin{pmatrix}
    5\\
    5\\
    5
\end{pmatrix},
\begin{pmatrix}
    0\\
    3\\
    7
\end{pmatrix},
\begin{pmatrix}
    0\\
    6\\
    7
\end{pmatrix},
\begin{pmatrix}
    2\\
    2\\
    2
\end{pmatrix},
\begin{pmatrix}
    2\\
    3\\
    4
\end{pmatrix}
    \Big).
\end{equation}
In \cref{fig:example qsa a}, 
we observe that the attention matrix accurately reconstructs $\rho$,
as demonstrated in \cref{eq: qsa rho}. 
\cref{fig:example qsa b} illustrates the graph representation of this input-output correlation based on the matrix $A$. 
In this graph, 
an output $i$ and an input $j$ are connected by the value of $A_{ij}$. 
The figure highlights the model's capability to recover the relationship: 
Output $y(1), y(3), y(6)$ each rely on a singular input value, 
output $y(0),y(2)$ are calculated as the average of two values, 
and output $y(4), y(5), y(7)$ each represent an average over three values.

In summary, 
these examples demonstrate tasks that follow the form presented in \cref{eq: target form}.
Moreover, 
the numerical experiments validates \cref{prop: target intrinsic properties},
where $F\circ f$ is always the same, and the attention matrix recovers the desired weights.

\subsection{Synthetic Examples}\label{sec: Synthetic Examples}

\begin{figure}[htbp]
    \centering
    \subfigure[$\sigma_k =
    \begin{cases}
        k^{-0.55} & k \leq r\\
        0 & k > r
    \end{cases}$
        ]
    {%
    \includegraphics[width=0.33\columnwidth]{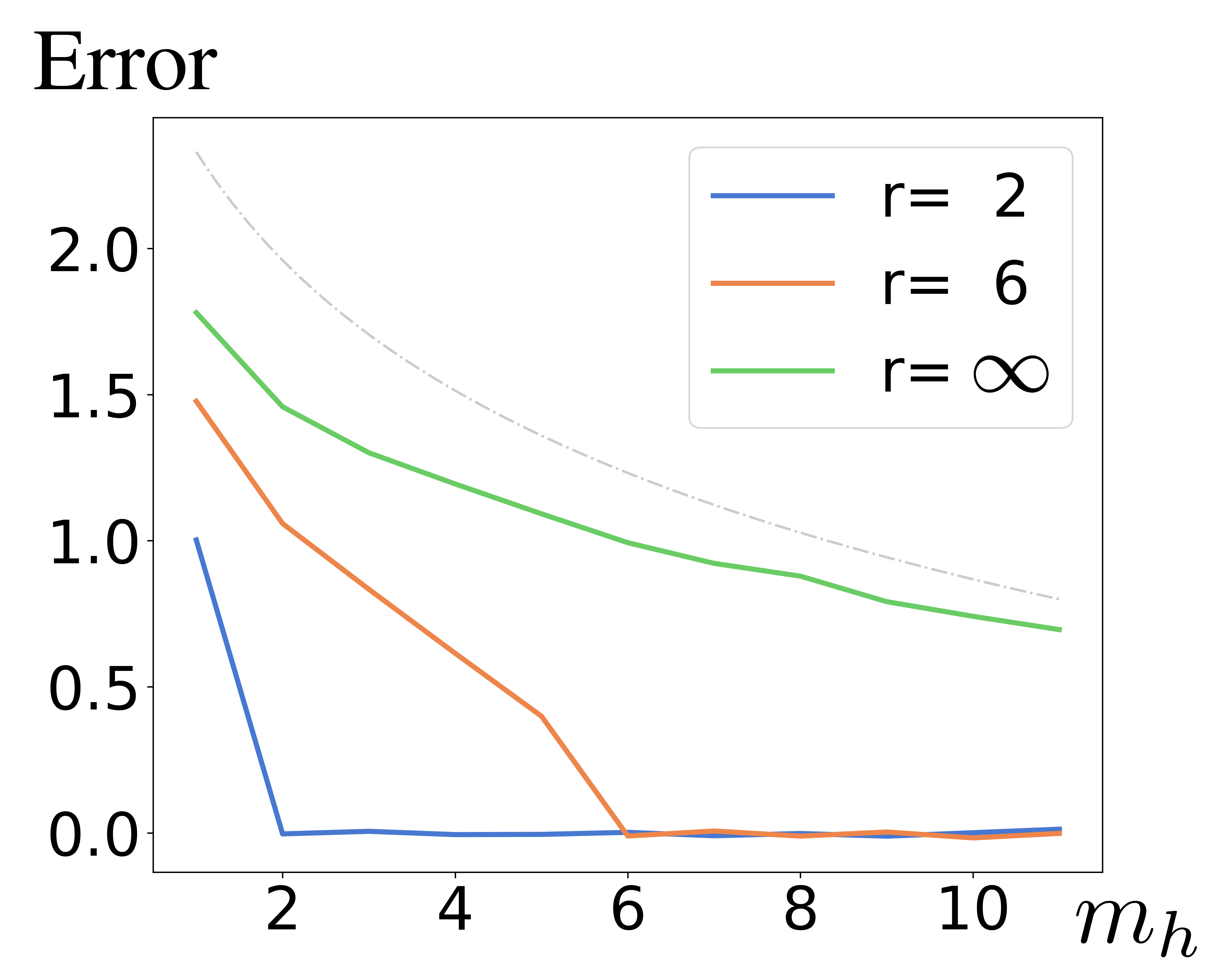}
    }
    \hspace{2.5cm}
    \subfigure[$\sigma_k =
    \begin{cases}
        k^{-1} & k \leq r\\
        0 & k > r
    \end{cases}$]
    {%
    \includegraphics[width=0.33\columnwidth]{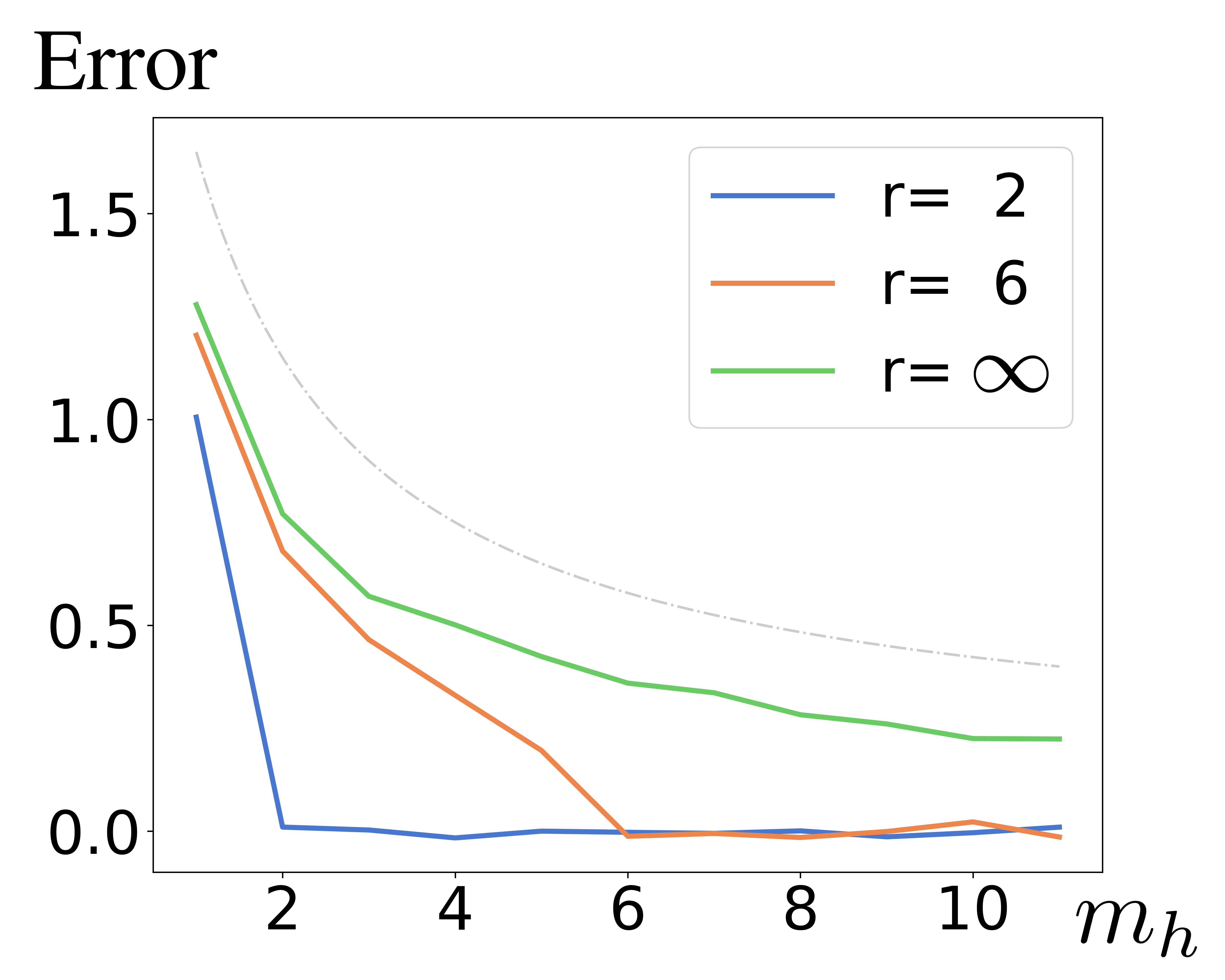}
    }
    \caption{ We  consider two class of $\rho$ with different singular value decay rate $\alpha$ as indicated above.
    Here, 
    $r$ denotes the rank of the target.
    For each (a) and (b), we consider three targets with $\rho$ having different ranks, where $r = 2,6,\infty$.
    The figure plots the training error against $m_h$. Each colored line corresponds to a target with rank $r$ as indicated in the legend.
    The grey dotted line plots $m_h^{-(2\alpha-1)}$.
    }
    \label{fig:singularvalue_decay}
\end{figure}

The Jackson-type approximation rate in 
\cref{thm: approximation rate} 
has the following prediction.
For sequence relationships admitting a representation  \cref{eq: target form},
if its rank $r$ is finite, then perfect approximation is possible as long as $m_h \geq r$ with $m_{\ff}$ large enough.
If $r=\infty$, then the approximation error is determined by decay of $\sigma_k$.
Concretely, 
if $\sigma_k \sim k^{-\alpha}$, 
then
the error decays like $m_h^{-(2\alpha-1)} + \text{constant}$,
provided $m_{\ff}$ is sufficiently large.
We numerically verify the prediction by constructing a set of targets using \cref{eq: target form}. 
In this specific example, 
we set both $F$ and $f$ as the identity function. 
The temporal coupling term $\rho(u,v)$ is formulated using an orthonormal bases where $\phi_i, \psi_i\in\{\sqrt{2}\sin{2i \pi  x}: i\geq 1\}$, 
accompanied by singular values with various decaying patterns. 
Subsequently, we trained the Transformer as defined in \cref{eq: model form} to learn these targets.
\cref{fig:singularvalue_decay} plots the training error against $m_h$,
where (a) and (b) correspond to targets with different singular value decay patterns.
We also consider different rank $r$ of the targets as plotted using lines with different colors.
The error decay rates of (a) and (b) are different,
which follows the decay rate determined by that of the singular values.
Furthermore, for a fixed $m_h$, the error increases with the target's rank.
The experiment results are consistent with our estimates in \cref{thm: approximation rate}.

\section{Experiment settings} \label{appen: experiment setting}
In this section, we summarize the settings for the numerical experiments.
\paragraph{Experiments in \cref{sec: Synthetic Examples}}

These experiments consider the synthetic targets with simplified Transformer models as defined in \cref{eq: model form}.
The target relationship is constructed using \cref{eq: target form}:
\begin{equation}
    H_t(\bm x) = F\left(\sum_{s=1}^\tau \sigma[\rho(x(t),x(\cdot))](s)f(x(s)) \right).
\end{equation}
We consider scalar inputs where $d=1$ and $F(x)=f(x)=x$ are identity functions.
$\rho(u,v)$ is constructed using the POD decomposition:
\begin{equation}
    \rho(u,v) = \sum_{i=1}^r \sigma_i\phi_i(u) \psi_i(v).
\end{equation}
We consider orthonormal bases $\phi_i, \psi_i\in\{\sqrt{2}\sin{2i \pi  x}: i\geq 1\}$.
By doing this, we can specify $\sigma_i$, which are then exactly the singular values.
The input sequence of length 16 is generated using a uniform distribution on $[0,1]$.
We use Transformers as defined in \cref{eq: model form} to learn these targets.
The feed-forward part is constructed using a dense network with a width of 128 and a depth of 3
to ensure it has enough expressiveness.
Moreover, we have $n=d_v=32$ and $m_h$, which range from $1$ to $16$, to construct a model with different ranks.
We use PyTorch default initialization and use normal training procedures with Adam Optimizer.
We train enough epochs to ensure the loss does not decrease
so that we can use the training error to estimate the approximation error.

\paragraph{Experiments in \cref{sec: real example}}
In this example, we tested the ViT model on the CIFAR10 dataset.
We consider the baseline model ViT\_B16 \cite{dosovitskiy2020.ImageWorth16x16}.
This model configuration utilizes 12 heads, 
and each head is of dimension 64. 
In \cref{fig:vit_singularvalue_decay}, we plot the singular values distribution of the attention matrix (before Softmax normalization) from the first attention head.
We then vary the size of each attention head from $1$ to $64$ while keeping other configurations unchanged.
We follow the training procedure as described in \cite{dosovitskiy2020.ImageWorth16x16} to train the modified models.

\paragraph{Experiments in \cref{sec: 6 compare}}

\paragraph{Comparison of the Transformer and RNNs}
We consider the following linear target relationship:
\begin{equation}
    H_t(\bm x) = \sum_{t=0}^t \rho(s)x(t-s),
\end{equation}
with kernel $\rho(s)=\exp{(-s)}$.
The input sequence of length 32 is generated using a uniform distribution on $[0,1]$.
For the RNN, 
we utilize a one-layer vanilla RNN architecture with 128 hidden units, 
employing a linear activation function. 
For the Transformer, we use the simplified structure as presented in \cref{eq: target form} with position encoding added.
For parameter settings, we have $n = d_v = m_h = 32 $.
The feed-forward part is constructed using a dense network with a width 128 and a depth 3. 

To change the temporal ordering, we permute the input while keeping the output unchanged,
this leads to a change in the temporal ordering of the target relationship.
For all the input sequences, we exchange the first 10 values with the last 10 values,
resulting in a change in the underlying temporal structure.
For the temporal mixing operation, we use a randomly generated length $5$ filter and the operation $\Conv$ described in \cref{sec: 6.2}.

In \cref{tab: tab1}, it is also interesting to note that for this target, the RNN performs better than the Transformer.
This is potentially because temporal ordering is inherently important in this target, 
and the Transformer is not good at capturing such relationships.
Some variants of the Transformer, 
such as Informer \cite{zhou2021.InformerEfficientTransformer} and Autoformer \cite{wu2022.AutoformerDecompositionTransformers}, 
have been proposed specifically to address temporal relationships, 
particularly in sequence forecasting tasks. 
However, 
empirical results presented in \citet{zeng2022.AreTransformersEffective} indicate that these variants still cannot effectively capture temporal relationships.

As a comparison, we consider a different type of linear relationship where the temporal ordering is unimportant.
We randomly generate the filter such that $\rho(s)\sim\mathcal U_{[0,1]}$.
This does not have temporal ordering since if we permute $\rho$, the distribution of it does not change.
The results are shown in \cref{tab: tab appen}.
We can observe that it is hard for RNN to learn this target. 
But the Transformer can learn this target, and a permutation of the input does not affect its performance.

\begin{table}[htbp] 
    \centering
    \begin{tabular}{ccc}
    \toprule
                      & \multicolumn{1}{c}{RNN} & Trans. \\ \midrule
    Original                      & \multicolumn{1}{c}{$5.58$}   & $5.43e{-4}$           \\
    Permuted                     & \multicolumn{1}{c}{$5.24$}   & $5.36e{-4}$         \\ \bottomrule
    \end{tabular}
    \caption{The table presents the MSE values for both RNN and the Transformer datasets without temporal ordering.}
    \label{tab: tab appen}
\end{table}

\paragraph{Real-world datasets for the Transformer}
In \cref{tab: tab2}, we test the performance of the Transformer on a permuted dataset considering real-world examples.
For the ViT experiment, 
we use the ViT\_B16 model. 
For the WMT2014 English-German dataset, we use the original Transformer model as proposed in \cite{vaswani2017.AttentionAllYou}.
For both experiments, 
when testing on the permuted dataset, 
we initialize the model with parameters that are trained on the original dataset.
To generate the permuted dataset, we fix a permutation such that the first 10 elements are shifted to the end of the sequence. 
This operation is applied to all the input sequences.
Note that the permutation of the ViT dataset is before the addition of position encoding and after the convolution embedding.

\section{Detailed Dissuasion of \cref{sec: 6 compare}} \label{appen: sec6}
In this section, we provide additional discussions related to \cref{sec: 6 compare}.
We first review the approximation results for the RNN,
the we provide proofs for the propositions presented in \cref{sec: 6 compare}.

\subsection{Introduce the approximation results for the RNN}\label{appen:rnn}

We here review the results of approximation results of RNN presented in \citet{li2021.ApproximationPropertiesRecurrent}.
Here, 
we consider input and output space defined by 
\begin{equation}
    \X= \set{\bm x: x(s)\in [0,1]^d, s\in\N_{\geq 0}}
\end{equation}
and 
\begin{equation}
    \Y = \set{\bm y: y(s)\in \R, s\in\N_{\geq 0}}.
\end{equation}
There are infinite sequences with time indices starting from zero.

We consider the following dynamics for the linear RNN:
\begin{align}
    h(t) &= W h(t-1) + U x(t)\\
    y(t) &= c^\top h(t),
\end{align}
where $h\in\R^n$ with $h(0) = 0$ is the hidden state of the RNN. $W\in\R^{n\times n}$, $U\in\R^{d\times n}$ and $c^\top \in \R^{n\times 1}$ are parameters.
The approximation budget of the model is $n$, which is the size of its hidden state.
We can explicitly solve this dynamic where we get
\begin{align}
    \hat H_t(\bm x) = \sum_{s=0}^t c^\top W^sUx(t-s).
\end{align}
Here, we assume the eigenvalue of $W$ has a negative real part to ensure it is stable when $s$ increases.

For the target, we consider a linear relationship represented by
\begin{align}\label{eq: linear target}
    H_t(\bm x) = \sum_{s=0}^t\rho(s) x(t-s),
\end{align}
where $\norm{\rho(s)}\leq\infty$ is a unique representation of $\bm H$.
Thus, the approximation capability of the RNN is characterized by the properties of $\rho$.

Note that $\hat\rho(s) = c^\top W^sU$ in the RNN exhibits an exponential decaying pattern.
Thus, the target must also have a similar decay pattern to achieve a good approximation.
We have precisely defined the following complexity measures for the RNN.
\begin{align}
    C_1(\bm H) = 1 + \inf\{\beta>0:\lim_{t\to\infty} e^{\frac{t}{\beta}} \abs{\rho(t)}=0\}.
\end{align}
This measures the decaying speed of the RNN,
we assume $\rho$ exhibits an exponential decay,
with decaying speed measured by $C_1$.
The next complexity considers the norm of $\rho$:
\begin{align}
    C_2(\bm H) = \sup\{ e^{\frac{t}{C_1(\bm H)}}\abs{\rho(t)} : t\geq0 \}.
\end{align}

Combined together, 
we define the complexity measure for RNN as 
\begin{equation}\label{eq:rnn complexity}
    C_{\text{RNN}} = C_1(\bm H) \cdot  C_2(\bm H)
\end{equation}

This considers the magnitude of $\rho$, $C_2$ is small if $\rho$ does not have a large value far from the origin.
We define the RNN approximation space to be targets that have finite complexity measures
\begin{align}
    \Cabrnn = \{\bm H \text{ satisfies \cref{eq: linear target}}: C_{\text{RNN}}<\infty\}.
\end{align}

The approximation rate of the RNN follows 
$\displaystyle \frac{d\cdot C_{\text{RNN}}}{m}$.

\subsection{Discussion of \cref{sec: 6.1}}\label{appen:6.1}

\paragraph{Proof of \cref{prop:6.1}}

We now present the proof for \cref{prop:6.1}

\begin{proof}
    Let $p:\N_{\geq0}\to\N_{\geq0}$ be a permutation on the time indices.
    We consider an altered target 
    \begin{align}
        \tilde H_t(\bm x \circ p) = H_t(\bm x )
    \end{align}
We then have 
    \begin{align}
        \tilde H_t(\bm x \circ p) &= \sum_{s=0}^t\tilde\rho(s) x(p(t-s))
    \end{align}
    Let's consider the most simple case where $p$ only permute two time indices
    such that $p(t_1) = t_2$ and  $t_2>t_1$.
    We consider the output at $t_1$
    \begin{align}
        H_{t_1}(\bm x) &= \sum_{s=0}^{t_1}\rho(s) x(t_1-s) \\ 
        & = \sum_{s=1}^{t_1}\rho(s) x(t_1-s) + \rho(0)x(t_1).
    \end{align}
    However, for $\tilde H_{t_1}(\bm x)$ we have 
    \begin{align}
        \tilde H_{t_1}(\bm x\circ p) &= \sum_{s=0}^{t_1}\tilde\rho(s) x(p(t_1-s)) \\ 
        & = \sum_{s=1}^{t_1}\tilde\rho(s) x(t_1-s) + \tilde\rho(0)x(t_2).
    \end{align}
    Note the in this case the output at $t_1$ depends on a future value of input $x(t_2)$,
    this means that $\tilde {\bm H}$ is no longer causal.
    Consequently, 
    $\tilde {\bm H}$ cannot be written in the form of \cref{eq: linear target},
    thus not belong to the RNN approximation space.
\end{proof}

\paragraph{Proof of \cref{prop:6.2}}
We next consider the \cref{prop:6.2}

\begin{proof}
   
    For a fixed permutation $p:\tau\to\tau$, 
    we consider $ \tilde H_t(\bm x \circ p) = H_t(\bm x )$.
    Suppose the altered target has the following form:
    \begin{align}
        \tilde H_t(\bm x \circ p) = \tilde F\left(\sum_{s=1}^\tau \sigma[\tilde\rho(x(p(t)),x(\cdot))](p(s))\tilde f(p(x(s))) \right),
    \end{align}
    we next show that $\tilde F, \tilde f$ and $\tilde \rho$ has same complexity measures as $F, f$ and $\rho$.
    Firstly, we note that we can remove the permutation on index $s$ because the ordering of a summation does not affect the result.
    \begin{align}
        \tilde H_t(\bm x \circ p) = \tilde F\left(\sum_{s=1}^\tau \sigma[\tilde\rho(x(p(t)),x(\cdot))](s)\tilde f(s)) \right).
    \end{align}
    We now consider the POD expansion of $\tilde\rho$ where
    \begin{align}
        \tilde\rho(x(p(t)), x(s)) = \sum_{i=1}^r \sigma_i\tilde\phi_i(x(p(t)))\,\tilde \psi_i(x(s)).
    \end{align}
    Recall that our input space is defined as \cref{eq: input space}.
    Where $x_s \in \I_s$ such that $\I_i$ and $\I_j$ are closed disjoint sets.
    Thus,
    for a pointwise function $\phi$, we can write it into the following piecewise form
    \begin{align}
            \phi(x) = \phi\depen{t}(x) \quad  x \in \I_t.
    \end{align}
   Let $\tilde\phi_i(x) = {\phi_i}\depen{p(t)}(x)$ for $x\in\I_t$.
   This is essentially the permutation of the disjoint pieces of $\tilde\phi_i$.
    Thus we have 
    \begin{align}
        \tilde\phi_i(x(p(t))) = \phi_i\depen{t}(x(t)).
    \end{align}
    Consequently, we have $\tilde\rho(x(p(t)),x(s)) = \rho(x(t),x(s))$.
    Finally let $\tilde F = F$ and $\tilde f = f$ we achieved $ \tilde H_t(\bm x \circ p) = H_t(\bm x )$.
    Since $\tilde F$ and $\tilde f$ remain unchanged, we only need to consider the complexity measure for $\rho$.
    We note that the rank of the POD is unaffected. We remain with $C_{\ff}(\tilde\phi_i)$.
    The function $\tilde \phi_i$ is a permutation of pieces of the piecewise function $\phi_i$.
    We next discuss the kind of approximation schemes that $C_{\ff}(\tilde\phi_i)$ is unchanged under this operation.

    Since the set $\I$ is disconnect, we apply Tietze extension theorem to extend $\phi_i$ to its convex hull,
    denoted as $\Phi_i$,
    such that $\Phi_i(x) = \phi_i(x)$ for $x\in\I$ and $\sup\{|{\Phi_i(x)}|\} = \sup\{|{\phi_i(x)}|\}$.

    We first consider the polynomial approximation as we introduced in \cref{sec: motivation}.
    In this case, we have 
    $C_{\ff}(\phi_i) = \max_{r=1\dots\alpha}\norm{\phi_i}_\infty$.
    We next show that $C_{\ff}(\tilde\phi_i) =C_{\ff}(\phi_i)$.
    Since $\I$ is closed and $\tilde\phi$ is continuous, 
    we apply Tietze extension theorem to extend $\tilde\phi_i$ to $\R^d$ denoted as $\tilde\Phi_i$,
    such that $\tilde\Phi_i(x) = \tilde\phi_i(x)$ for $x\in\I$ and $\sup\{|{\tilde\Phi_i(x)}|\} = \sup\{|{\tilde\phi_i(x)}|\}$.
    This implies that $C_{\ff}(\tilde\phi_i) = C_{\ff}(\phi_i)$.
    Thus, in the case of polynomial approximation, the complexity measures remain unchanged.
    
    Next, we consider the approximation using the ReLu network.
    We assume $\phi_i$ to be Hölder continuous such that
    \begin{align}
        \abs{f(x)-f(y)}\leq C\norm{x-y}^\alpha.
    \end{align}
    The Tietze extension theorem states we can extend it to $\Phi_i$ with the same $C$ and $\alpha$.

    As shown in \citet{shen2022.OptimalApproximationRate}, 
    the approximation capability of a ReLU network on a  Hölder function depends on $C$ and $\alpha$.
    Where functions with small $C,\alpha$ can be approximated more effectively.
    Since $\tilde\phi_i$ is a permutation of disconnected pieces of $\phi_i$, 
    its Hölder constant remains unchanged.
    Thus, its $\tilde\Phi_i$ has same Hölder constant as $\Phi_i$,
    which implies the complexity measure is the same.

\end{proof}

In the above proof, 
we considered two different approximation schemes where the complexity measures of $\tilde {\bm H}$ remain unchanged.

\subsection{Discussion for \cref{sec: 6.2}} \label{appen:6.2}
\paragraph{Proof of \cref{prop:6.3}}
We first discuss \cref{prop:6.3}.
Consider a filter $\theta$ which is a length $l$ filter with $\norm{\theta}_1 \leq 1$.
    The altered target is defined by 
    \begin{equation}
        \tilde H_t(\bm x  ) =   H_t(\theta\Conv\bm x  ) = \sum_{s=0}^t\rho[s] (\theta\Conv x)[t-s].
    \end{equation}
Rearrange the summation, and we get

\begin{align}
    \tilde H_t(\bm x  ) & = \sum_{s=0}^t\rho[s] (\theta\Conv x)[t-s] \\
    & =  \sum_{s=0}^t\rho(s) \sum_{s'=0}^{l-1} \theta[s'] x(t-s+s')\\
    & = \sum_{s=0}^t \sum_{s'=0}^{l-1} \theta[s']  \rho(s) x(t-s+s')\\
    & = \sum_{s=0}^t \sum_{s'=0}^{l-1}\theta[s'] \rho[s+s'] x[t-s]\\
    & = \sum_{s=0}^t (\theta\Conv \rho)[s] x[t-s]\\
    & = \sum_{s=0}^t \tilde\rho[s] x[t-s].
\end{align}
Thus, we have the altered kernel denoted as $\tilde \rho = \theta\Conv\rho$.

\begin{proof}
Firstly, for  $\abs{\rho(s)} \leq exp(-\frac{s}{\beta})$, by the definition of $C_1$ and $C_2$ 
we have $C_1(\bm H) \leq \beta$ and $C_2(\bm H)\leq1$.

By assumption we have $\rho$ exhibits exponential decay such that $\abs{\rho(s)} \leq e^{-\beta s}$ for some $\beta$.
We then have 
\begin{align}
  \abs{ \tilde\rho(s) }= | (\theta\Conv\rho)[t] | = \sum_{s'=0}^{l-1} |\theta(s')\rho(s+s')|\leq  e^{-\beta s}\norm{\theta}_1= e^{-\beta s},
\end{align}
which shows that $  \tilde\rho[t]  $ is also bounded by $ e^{-\beta s}$.
This implies that $C_1(\tilde{\bm H}) \leq \beta$ and $C_2(\tilde{\bm H}) \leq 1$.

\end{proof}

\paragraph{Transformer Affected by Temporal Mixing} \label{appen: transformer worse}

Finally, 
we present an example where a temporal mixing in the input will affect the target's rank.
For simplicity we consider the case where $d=1$ and $\tau=2$,
and $x_1,x_2 \in [0,1]$.
Note that here, we do not assume $x_1$ and $x_2$ belong to disjoint intervals for simplicity.
For target in the form of \cref{eq: target form}, we consider a temporal coupling term $\rho$ defined as 
\begin{equation}
    \rho(x_1,x_2) = 1+ \sqrt{2}\sin(2\pi x_1)\cdot \sqrt{2}\sin(2\pi x_2).
\end{equation}
In this case, $\rho(u,v)$ is of rank 2 with both singular values equal to $1$.
Consider a new input $\tilde {\bm x} = (x_1,x_2+x_1)$ with temporal mixing.
We then have 
\begin{align}
    \rho(x_1,x_2+x_1) &= 1+ \sqrt{2}\sin(2\pi x_1)\cdot \sqrt{2}\sin(2\pi (x_1,x)).
\end{align}
We numerically estimate the singular values of $\tilde\rho$ to get $\sigma_1 = 1.265$, $\sigma_2 = 0.5$ and $\sigma_3=0.4$.
The rank of $\tilde\rho$ increases, and there are also extra bases included.
Here, we only consider the rank $\rho$ from the same representation,
while we do not exclude the possibility that there exist other representations that may have another rank pattern. 
Indeed, 
analyzing the temporal mixing in the Transformer is non-trivial; we only provided an example where the rank increases.
Conversely, 
a suitable deconvolution process can also decrease the complexity measure of the target.
We leave it as a future direction for analysis.

\newpage
\section*{NeurIPS Paper Checklist}

\begin{enumerate}

\item {\bf Claims}
    \item[] Question: Do the main claims made in the abstract and introduction accurately reflect the paper's contributions and scope?
    \item[] Answer: \answerYes{} 
    \item[] Justification: The abstract and introduction accurately reflect the paper's contributions and scope.
    \item[] Guidelines:
    \begin{itemize}
        \item The answer NA means that the abstract and introduction do not include the claims made in the paper.
        \item The abstract and/or introduction should clearly state the claims made, including the contributions made in the paper and important assumptions and limitations. A No or NA answer to this question will not be perceived well by the reviewers. 
        \item The claims made should match theoretical and experimental results, and reflect how much the results can be expected to generalize to other settings. 
        \item It is fine to include aspirational goals as motivation as long as it is clear that these goals are not attained by the paper. 
    \end{itemize}

\item {\bf Limitations}
    \item[] Question: Does the paper discuss the limitations of the work performed by the authors?
    \item[] Answer: \answerYes{} 
    \item[] Justification: We made simplification and assumptions for theoretical analysis. However, we performed empirical experiments to demonstrate the implication of the theoretical results also holds true in general settings.
    \item[] Guidelines:
    \begin{itemize}
        \item The answer NA means that the paper has no limitation while the answer No means that the paper has limitations, but those are not discussed in the paper. 
        \item The authors are encouraged to create a separate "Limitations" section in their paper.
        \item The paper should point out any strong assumptions and how robust the results are to violations of these assumptions (e.g., independence assumptions, noiseless settings, model well-specification, asymptotic approximations only holding locally). The authors should reflect on how these assumptions might be violated in practice and what the implications would be.
        \item The authors should reflect on the scope of the claims made, e.g., if the approach was only tested on a few datasets or with a few runs. In general, empirical results often depend on implicit assumptions, which should be articulated.
        \item The authors should reflect on the factors that influence the performance of the approach. For example, a facial recognition algorithm may perform poorly when image resolution is low or images are taken in low lighting. Or a speech-to-text system might not be used reliably to provide closed captions for online lectures because it fails to handle technical jargon.
        \item The authors should discuss the computational efficiency of the proposed algorithms and how they scale with dataset size.
        \item If applicable, the authors should discuss possible limitations of their approach to address problems of privacy and fairness.
        \item While the authors might fear that complete honesty about limitations might be used by reviewers as grounds for rejection, a worse outcome might be that reviewers discover limitations that aren't acknowledged in the paper. The authors should use their best judgment and recognize that individual actions in favor of transparency play an important role in developing norms that preserve the integrity of the community. Reviewers will be specifically instructed to not penalize honesty concerning limitations.
    \end{itemize}

\item {\bf Theory Assumptions and Proofs}
    \item[] Question: For each theoretical result, does the paper provide the full set of assumptions and a complete (and correct) proof?
    \item[] Answer: \answerYes{} 
    \item[] Justification: All the theoretical settings and assumptions are presented. Complete proofs of theorems are presented in the appendices.
    \item[] Guidelines:
    \begin{itemize}
        \item The answer NA means that the paper does not include theoretical results. 
        \item All the theorems, formulas, and proofs in the paper should be numbered and cross-referenced.
        \item All assumptions should be clearly stated or referenced in the statement of any theorems.
        \item The proofs can either appear in the main paper or the supplemental material, but if they appear in the supplemental material, the authors are encouraged to provide a short proof sketch to provide intuition. 
        \item Inversely, any informal proof provided in the core of the paper should be complemented by formal proofs provided in appendix or supplemental material.
        \item Theorems and Lemmas that the proof relies upon should be properly referenced. 
    \end{itemize}

    \item {\bf Experimental Result Reproducibility}
    \item[] Question: Does the paper fully disclose all the information needed to reproduce the main experimental results of the paper to the extent that it affects the main claims and/or conclusions of the paper (regardless of whether the code and data are provided or not)?
    \item[] Answer: \answerYes{} 
    \item[] Justification: We conducted numerical experiments to demonstrate our theoretical results, settings are presented in the appendices.
    \item[] Guidelines:
    \begin{itemize}
        \item The answer NA means that the paper does not include experiments.
        \item If the paper includes experiments, a No answer to this question will not be perceived well by the reviewers: Making the paper reproducible is important, regardless of whether the code and data are provided or not.
        \item If the contribution is a dataset and/or model, the authors should describe the steps taken to make their results reproducible or verifiable. 
        \item Depending on the contribution, reproducibility can be accomplished in various ways. For example, if the contribution is a novel architecture, describing the architecture fully might suffice, or if the contribution is a specific model and empirical evaluation, it may be necessary to either make it possible for others to replicate the model with the same dataset, or provide access to the model. In general. releasing code and data is often one good way to accomplish this, but reproducibility can also be provided via detailed instructions for how to replicate the results, access to a hosted model (e.g., in the case of a large language model), releasing of a model checkpoint, or other means that are appropriate to the research performed.
        \item While NeurIPS does not require releasing code, the conference does require all submissions to provide some reasonable avenue for reproducibility, which may depend on the nature of the contribution. For example
        \begin{enumerate}
            \item If the contribution is primarily a new algorithm, the paper should make it clear how to reproduce that algorithm.
            \item If the contribution is primarily a new model architecture, the paper should describe the architecture clearly and fully.
            \item If the contribution is a new model (e.g., a large language model), then there should either be a way to access this model for reproducing the results or a way to reproduce the model (e.g., with an open-source dataset or instructions for how to construct the dataset).
            \item We recognize that reproducibility may be tricky in some cases, in which case authors are welcome to describe the particular way they provide for reproducibility. In the case of closed-source models, it may be that access to the model is limited in some way (e.g., to registered users), but it should be possible for other researchers to have some path to reproducing or verifying the results.
        \end{enumerate}
    \end{itemize}

\item {\bf Open access to data and code}
    \item[] Question: Does the paper provide open access to the data and code, with sufficient instructions to faithfully reproduce the main experimental results, as described in supplemental material?
    \item[] Answer: \answerNA{} 
    \item[] Justification: We conducted numerical experiments, with settings clearly presented in the paper.
    \item[] Guidelines:
    \begin{itemize}
        \item The answer NA means that paper does not include experiments requiring code.
        \item Please see the NeurIPS code and data submission guidelines (\url{https://nips.cc/public/guides/CodeSubmissionPolicy}) for more details.
        \item While we encourage the release of code and data, we understand that this might not be possible, so “No” is an acceptable answer. Papers cannot be rejected simply for not including code, unless this is central to the contribution (e.g., for a new open-source benchmark).
        \item The instructions should contain the exact command and environment needed to run to reproduce the results. See the NeurIPS code and data submission guidelines (\url{https://nips.cc/public/guides/CodeSubmissionPolicy}) for more details.
        \item The authors should provide instructions on data access and preparation, including how to access the raw data, preprocessed data, intermediate data, and generated data, etc.
        \item The authors should provide scripts to reproduce all experimental results for the new proposed method and baselines. If only a subset of experiments are reproducible, they should state which ones are omitted from the script and why.
        \item At submission time, to preserve anonymity, the authors should release anonymized versions (if applicable).
        \item Providing as much information as possible in supplemental material (appended to the paper) is recommended, but including URLs to data and code is permitted.
    \end{itemize}

\item {\bf Experimental Setting/Details}
    \item[] Question: Does the paper specify all the training and test details (e.g., data splits, hyperparameters, how they were chosen, type of optimizer, etc.) necessary to understand the results?
    \item[] Answer: \answerYes{} 
    \item[] Justification: Detailed experiment settings are presented in the appendices.
    \item[] Guidelines:
    \begin{itemize}
        \item The answer NA means that the paper does not include experiments.
        \item The experimental setting should be presented in the core of the paper to a level of detail that is necessary to appreciate the results and make sense of them.
        \item The full details can be provided either with the code, in appendix, or as supplemental material.
    \end{itemize}

\item {\bf Experiment Statistical Significance}
    \item[] Question: Does the paper report error bars suitably and correctly defined or other appropriate information about the statistical significance of the experiments?
    \item[] Answer: \answerNA{} 
    \item[] Justification: Our numeric demonstrations does not require error bars.
    \item[] Guidelines:
    \begin{itemize}
        \item The answer NA means that the paper does not include experiments.
        \item The authors should answer "Yes" if the results are accompanied by error bars, confidence intervals, or statistical significance tests, at least for the experiments that support the main claims of the paper.
        \item The factors of variability that the error bars are capturing should be clearly stated (for example, train/test split, initialization, random drawing of some parameter, or overall run with given experimental conditions).
        \item The method for calculating the error bars should be explained (closed form formula, call to a library function, bootstrap, etc.)
        \item The assumptions made should be given (e.g., Normally distributed errors).
        \item It should be clear whether the error bar is the standard deviation or the standard error of the mean.
        \item It is OK to report 1-sigma error bars, but one should state it. The authors should preferably report a 2-sigma error bar than state that they have a 96\% CI, if the hypothesis of Normality of errors is not verified.
        \item For asymmetric distributions, the authors should be careful not to show in tables or figures symmetric error bars that would yield results that are out of range (e.g. negative error rates).
        \item If error bars are reported in tables or plots, The authors should explain in the text how they were calculated and reference the corresponding figures or tables in the text.
    \end{itemize}

\item {\bf Experiments Compute Resources}
    \item[] Question: For each experiment, does the paper provide sufficient information on the computer resources (type of compute workers, memory, time of execution) needed to reproduce the experiments?
    \item[] Answer: \answerNA{} 
    \item[] Justification: Our paper does not contain computational extensive experiments.
    \item[] Guidelines:
    \begin{itemize}
        \item The answer NA means that the paper does not include experiments.
        \item The paper should indicate the type of compute workers CPU or GPU, internal cluster, or cloud provider, including relevant memory and storage.
        \item The paper should provide the amount of compute required for each of the individual experimental runs as well as estimate the total compute. 
        \item The paper should disclose whether the full research project required more compute than the experiments reported in the paper (e.g., preliminary or failed experiments that didn't make it into the paper). 
    \end{itemize}
    
\item {\bf Code Of Ethics}
    \item[] Question: Does the research conducted in the paper conform, in every respect, with the NeurIPS Code of Ethics \url{https://neurips.cc/public/EthicsGuidelines}?
    \item[] Answer: \answerYes{} 
    \item[] Justification: The Code of Ethics is followed.
    \item[] Guidelines:
    \begin{itemize}
        \item The answer NA means that the authors have not reviewed the NeurIPS Code of Ethics.
        \item If the authors answer No, they should explain the special circumstances that require a deviation from the Code of Ethics.
        \item The authors should make sure to preserve anonymity (e.g., if there is a special consideration due to laws or regulations in their jurisdiction).
    \end{itemize}

\item {\bf Broader Impacts}
    \item[] Question: Does the paper discuss both potential positive societal impacts and negative societal impacts of the work performed?
    \item[] Answer: \answerNA{} 
    \item[] Justification: Our results are theoretical, and there is no societal impact.
    \item[] Guidelines:
    \begin{itemize}
        \item The answer NA means that there is no societal impact of the work performed.
        \item If the authors answer NA or No, they should explain why their work has no societal impact or why the paper does not address societal impact.
        \item Examples of negative societal impacts include potential malicious or unintended uses (e.g., disinformation, generating fake profiles, surveillance), fairness considerations (e.g., deployment of technologies that could make decisions that unfairly impact specific groups), privacy considerations, and security considerations.
        \item The conference expects that many papers will be foundational research and not tied to particular applications, let alone deployments. However, if there is a direct path to any negative applications, the authors should point it out. For example, it is legitimate to point out that an improvement in the quality of generative models could be used to generate deepfakes for disinformation. On the other hand, it is not needed to point out that a generic algorithm for optimizing neural networks could enable people to train models that generate Deepfakes faster.
        \item The authors should consider possible harms that could arise when the technology is being used as intended and functioning correctly, harms that could arise when the technology is being used as intended but gives incorrect results, and harms following from (intentional or unintentional) misuse of the technology.
        \item If there are negative societal impacts, the authors could also discuss possible mitigation strategies (e.g., gated release of models, providing defenses in addition to attacks, mechanisms for monitoring misuse, mechanisms to monitor how a system learns from feedback over time, improving the efficiency and accessibility of ML).
    \end{itemize}
    
\item {\bf Safeguards}
    \item[] Question: Does the paper describe safeguards that have been put in place for responsible release of data or models that have a high risk for misuse (e.g., pretrained language models, image generators, or scraped datasets)?
    \item[] Answer: \answerNA{} 
    \item[] Justification: Our paper have no such risks.
    \item[] Guidelines:
    \begin{itemize}
        \item The answer NA means that the paper poses no such risks.
        \item Released models that have a high risk for misuse or dual-use should be released with necessary safeguards to allow for controlled use of the model, for example by requiring that users adhere to usage guidelines or restrictions to access the model or implementing safety filters. 
        \item Datasets that have been scraped from the Internet could pose safety risks. The authors should describe how they avoided releasing unsafe images.
        \item We recognize that providing effective safeguards is challenging, and many papers do not require this, but we encourage authors to take this into account and make a best faith effort.
    \end{itemize}

\item {\bf Licenses for existing assets}
    \item[] Question: Are the creators or original owners of assets (e.g., code, data, models), used in the paper, properly credited and are the license and terms of use explicitly mentioned and properly respected?
    \item[] Answer: \answerNA{} 
    \item[] Justification: Our paper does not use existing assets.
    \item[] Guidelines:
    \begin{itemize}
        \item The answer NA means that the paper does not use existing assets.
        \item The authors should cite the original paper that produced the code package or dataset.
        \item The authors should state which version of the asset is used and, if possible, include a URL.
        \item The name of the license (e.g., CC-BY 4.0) should be included for each asset.
        \item For scraped data from a particular source (e.g., website), the copyright and terms of service of that source should be provided.
        \item If assets are released, the license, copyright information, and terms of use in the package should be provided. For popular datasets, \url{paperswithcode.com/datasets} has curated licenses for some datasets. Their licensing guide can help determine the license of a dataset.
        \item For existing datasets that are re-packaged, both the original license and the license of the derived asset (if it has changed) should be provided.
        \item If this information is not available online, the authors are encouraged to reach out to the asset's creators.
    \end{itemize}

\item {\bf New Assets}
    \item[] Question: Are new assets introduced in the paper well documented and is the documentation provided alongside the assets?
    \item[] Answer: \answerNA{} 
    \item[] Justification: Our paper does not release new assets.
    \item[] Guidelines:
    \begin{itemize}
        \item The answer NA means that the paper does not release new assets.
        \item Researchers should communicate the details of the dataset/code/model as part of their submissions via structured templates. This includes details about training, license, limitations, etc. 
        \item The paper should discuss whether and how consent was obtained from people whose asset is used.
        \item At submission time, remember to anonymize your assets (if applicable). You can either create an anonymized URL or include an anonymized zip file.
    \end{itemize}

\item {\bf Crowdsourcing and Research with Human Subjects}
    \item[] Question: For crowdsourcing experiments and research with human subjects, does the paper include the full text of instructions given to participants and screenshots, if applicable, as well as details about compensation (if any)? 
    \item[] Answer: \answerNA{} 
    \item[] Justification: Our paper does not involve crowdsourcing nor research with human subjects.
    \item[] Guidelines:
    \begin{itemize}
        \item The answer NA means that the paper does not involve crowdsourcing nor research with human subjects.
        \item Including this information in the supplemental material is fine, but if the main contribution of the paper involves human subjects, then as much detail as possible should be included in the main paper. 
        \item According to the NeurIPS Code of Ethics, workers involved in data collection, curation, or other labor should be paid at least the minimum wage in the country of the data collector. 
    \end{itemize}

\item {\bf Institutional Review Board (IRB) Approvals or Equivalent for Research with Human Subjects}
    \item[] Question: Does the paper describe potential risks incurred by study participants, whether such risks were disclosed to the subjects, and whether Institutional Review Board (IRB) approvals (or an equivalent approval/review based on the requirements of your country or institution) were obtained?
    \item[] Answer: \answerNA{} 
    \item[] Justification: Our paper does not involve crowdsourcing nor research with human subjects.
    \item[] Guidelines:
    \begin{itemize}
        \item The answer NA means that the paper does not involve crowdsourcing nor research with human subjects.
        \item Depending on the country in which research is conducted, IRB approval (or equivalent) may be required for any human subjects research. If you obtained IRB approval, you should clearly state this in the paper. 
        \item We recognize that the procedures for this may vary significantly between institutions and locations, and we expect authors to adhere to the NeurIPS Code of Ethics and the guidelines for their institution. 
        \item For initial submissions, do not include any information that would break anonymity (if applicable), such as the institution conducting the review.
    \end{itemize}

\end{enumerate}

\end{document}